\documentclass[conference]{IEEEtran}
\usepackage{times}

\let\labelindent\relax

\usepackage{amsmath,amsfonts,bm}
\usepackage{graphicx}

\def\1{\bm{1}}

\DeclareMathAlphabet{\mathsfit}{\encodingdefault}{\sfdefault}{m}{sl}
\SetMathAlphabet{\mathsfit}{bold}{\encodingdefault}{\sfdefault}{bx}{n}

\usepackage{xcolor}
\usepackage{xspace}

\newcommand{\eg}{e.g.,\xspace}

\definecolor{taskred}{HTML}{CC3838}
\definecolor{taskblue}{HTML}{4E7EC0}
\definecolor{taskorange}{HTML}{C07838}
\definecolor{taskpurple}{HTML}{6D4FC7}
\definecolor{taskdarkpurple}{HTML}{7B6FC4}

\definecolor{dsrl}{HTML}{7d87e8}
\definecolor{rlpd}{HTML}{459ba4}
\definecolor{spirl}{HTML}{ea51c7}
\definecolor{postbc}{HTML}{95923f}
\definecolor{tmrl}{HTML}{3eb076}
\definecolor{tmrl_cfg}{HTML}{3eb076}

\newcommand{\dsrl}{\textcolor{dsrl}{{{{DSRL}}}}}
\newcommand{\rlpd}{\textcolor{rlpd}{{{{RLPD}}}}}
\newcommand{\postbc}{\textcolor{postbc}{{{{PostBC}}}}}
\newcommand{\spirl}{\textcolor{spirl}{{{{SPiRL}}}}}
\newcommand{\tmrl}{\textcolor{tmrl}{{{{TMRL}}}}}
\newcommand{\tmrlcfg}{\textcolor{tmrl}{{{{TMRL-CFG}}}}}

\newcommand{\pedcond}{$p\!\left(a \mid\mspace{-6mu}\raisebox{-0.2\height}{\includegraphics[height=0.9em]{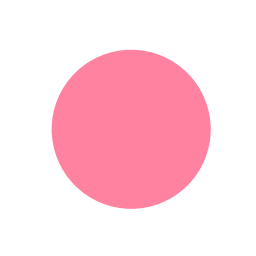}}\right)$}
\newcommand{\goal}{\textcolor[HTML]{ff83a1}{goal}}
\newcommand{\agent}{\textcolor[HTML]{efbe58}{agent}}

\usepackage[numbers]{natbib}
\usepackage[colorlinks=true, allcolors=blue]{hyperref}
\hypersetup{
  pdftitle={TMRL: Diffusion Timestep-Modulated Pretraining Enables Exploration for Efficient Policy Finetuning},
  pdfauthor={Matthew M. Hong, Jesse Zhang, Anusha Nagabandi, Abhishek Gupta},
  pdfsubject={Robot Learning},
  pdfkeywords={reinforcement learning, behavior cloning, diffusion policies, exploration}
}
\usepackage[nameinlink,capitalize]{cleveref}
\usepackage{subcaption}
\usepackage{wrapfig}
\usepackage{xcolor}
\usepackage{algorithm}
\usepackage{algorithmic}
\usepackage{amsmath, amssymb, amsthm}
\newtheorem{theorem}{Theorem}
\usepackage{url}
\usepackage{graphicx}
\usepackage{booktabs}
\usepackage{enumitem}
\usepackage{mathtools}

\newcommand{\method}{\textsc{TMRL}\xspace}
\newtheorem{corollary}{Corollary}[theorem]

\title{TMRL: Diffusion Timestep-Modulated Pretraining Enables Exploration for Efficient Policy Finetuning}

\begin{document}

\author{
Matthew M. Hong$^{1}$,
Jesse Zhang$^{1}$,
Anusha Nagabandi$^{2}$,
Abhishek Gupta$^{1}$\\[4pt]
{\small
$^{1}$University of Washington\quad
$^{2}$Amazon FAR\quad
}\\
}

\makeatletter
\let\@oldmaketitle\@maketitle
\renewcommand{\@maketitle}{\@oldmaketitle
    \includegraphics[width=\linewidth]{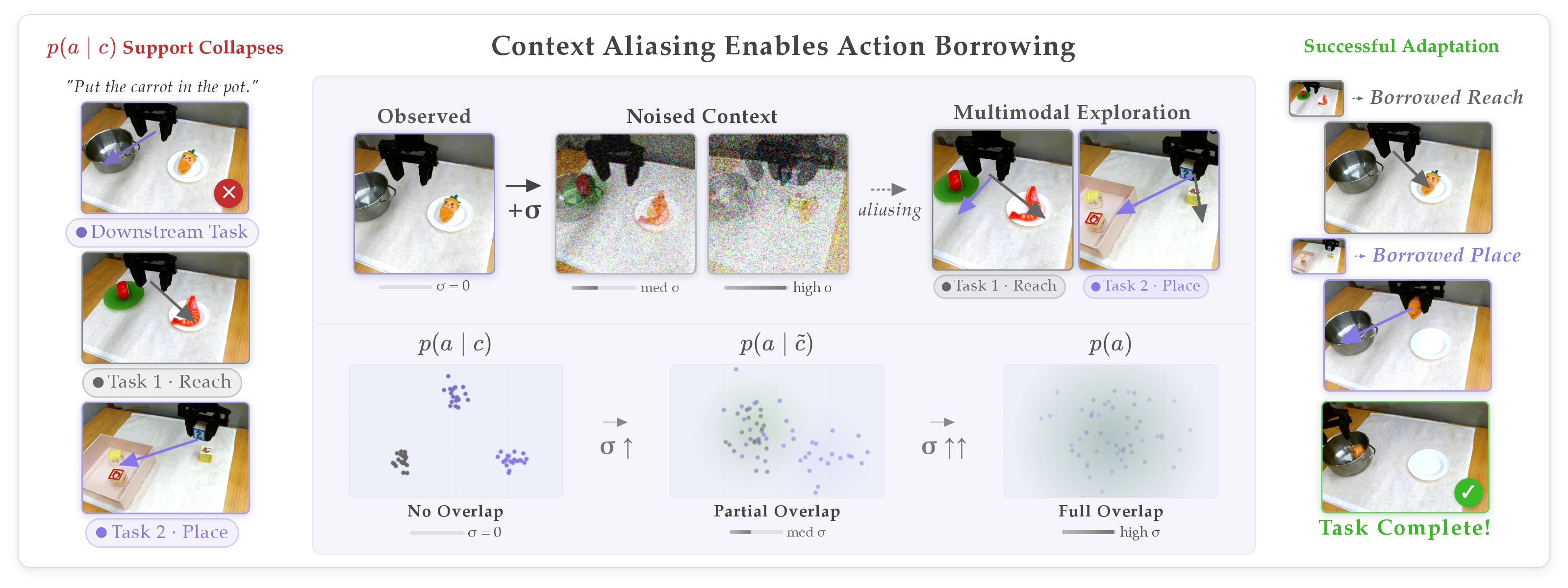}
       \refstepcounter{figure}\footnotesize{{Fig. 1:} TMRL bridges behavior cloning (BC) pre-training and RL fine-tuning by smoothing the conditioning of policy inputs (contexts). During pre-training, Context-Smoothed Pre-training (CSP) injects noise into contexts $c$, inducing a continuum from sharp imitation $p(a \mid c)$ to broader, marginal action distributions $p(a)$ via diffusion noise parameterized by $\sigma$ (bottom row). This smoothing causes nearby contexts to overlap in representation space, with similar contexts (e.g., \textcolor{taskdarkpurple}{\textbf{downstream task}}, \textcolor{gray}{\textbf{task~1}} in left column) merging at lower $\sigma$ than dissimilar ones (e.g., \textcolor{dsrl}{\textbf{task~2}}). During RL fine-tuning, TMRL \emph{learns} to dynamically \emph{modulate} conditioning strength, interpolating between context-conditioned and exploratory actions for improved exploration and adaptation.}
  \label{fig:teaser} \medskip \vspace{-10pt}}
\makeatother
\maketitle

\begin{abstract}
Fine-tuning pre-trained robot policies with reinforcement learning (RL) often inherits the bottlenecks introduced by pre-training with behavioral cloning (BC), which produces narrow action distributions that lack the coverage necessary for downstream exploration. We present a unified framework that enables the exploration necessary to enable efficient robot-policy finetuning by bridging BC pre-training and RL fine-tuning. Our pre-training method, Context-Smoothed Pre-training (CSP), injects forward-diffusion noise into policy inputs, creating a continuum between precise imitation and broad action coverage. We then fine-tune pre-trained policies via Timestep-Modulated Reinforcement Learning (TMRL), which trains the agent to dynamically adjust this conditioning during fine-tuning by modulating the diffusion timestep, granting explicit control over exploration. Integrating seamlessly with arbitrary policy inputs, e.g., states, 3D point clouds, or image-based VLA policies, we show that TMRL improves RL fine-tuning sample efficiency. Notably, TMRL enables successful real-world fine-tuning on complex manipulation tasks in under one hour.
Videos and code available at \url{https://weirdlabuw.github.io/tmrl/}.
\end{abstract}  

\renewcommand\thefigure{\arabic{figure}}
\setcounter{figure}{1}

\IEEEpeerreviewmaketitle

\section{Introduction}
A dominant paradigm for training real-world robotic policies is to first pre-train on large-scale demonstration datasets via imitation learning, and then fine-tune the resulting policy with reinforcement learning (RL) in deployment environments~\citep{wagenmaker2025posteriorbehavioralcloningpretraining, wagenmaker2025steering, intelligence2025pi06vla, zhang2023bootstrap, zhang2024sprint, luo2025precise, zhang2024extract, hu2025flare, yin2025sgft, yang2023robofume}. 
This RL fine-tuning stage is critical for improving task precision, throughput and robustness~\citep{lei2026rl100performantroboticmanipulation, luo2025precise, li2025grrlgoingdexterousprecise}, yet remains bottlenecked by sample efficiency due to the cost of real-world interaction. While prior work in robotics primarily focuses on improving the RL algorithms themselves, comparatively little attention has been paid to ensuring that pre-trained policies provide effective initializations for downstream RL. 
In this work, we propose a combined robot policy pre-training and RL fine-tuning framework that enables exploration for sample-efficient adaptation.

Standard behavior cloning (BC) trains a policy to imitate demonstrator actions~\citep{Pomerleau-1989-15721, Bain1995AFF}. 
When demonstrations densely cover a context $c$ (e.g., observation and task instruction), BC accurately models the conditional action distribution $p(a \mid c)$~\citep{florence2021implicit, chi2023diffusionpolicy}. 
However, in sparsely covered regions and under distribution shift, BC overfits to observed data: the conditional support collapses and BC can assign near-zero probability to optimal actions. 
As a result, online rollouts during downstream adaptation yield little reward, causing reinforcement learning to fail to receive the learning signal needed to improve. 

Building on this insight, recent work has examined the relationship between pre-training via supervised learning and RL fine-tuning~\citep{springer2025overtrained, chen2025rethinking, zeng2025pretrainingindicatorsreliablypredict, wagenmaker2025posteriorbehavioralcloningpretraining, chen2026the}, primarily in language domains. 
In robotics specifically, \citet{wagenmaker2025posteriorbehavioralcloningpretraining} shows that achieving high \emph{action coverage}—the ability to sample actions a demonstrator would take—is \emph{necessary} for efficient RL fine-tuning. 
Thus, they inject Gaussian action noise during training to increase coverage.
However, there is a fundamental trade-off: random actions guarantee coverage but lack task proficiency, while BC policies can achieve high proficiency but have limited out-of-distribution coverage. A naive solution is to collect more data, but doing so for every new setting can be costly. Instead, we ask: how can we pre-train policies that \emph{adaptively} broaden their action distribution during RL?

Our key insight is to train policies that \emph{interpolate} between the learned conditional $p(a \mid c)$ and marginal $p(a)$ action distributions.
While $p(a \mid c)$ offers precise behavior in familiar contexts, $p(a)$ provides broad coverage needed for exploration. Interpolating between these extremes enables balancing exploitation and exploration based on the context's novelty.

We instantiate this idea through \textit{Context-Smoothed Policies (CSPs)}, a pre-training framework that expands the support of BC policies by injecting noise into context inputs via a \emph{forward diffusion process}. Introducing low levels of diffusion noise can alias similar contexts $c$~\citep{hu2025stemob}. Aliasing the input contexts leads to learning a broader conditional action distribution, since actions are ``borrowed" across nearby contexts. As the noise increases, the policy smoothly transitions from a narrow conditional distribution $p(a \mid c)$ toward a broad marginal distribution $p(a)$. At maximum noise, $c$ contains no information and the policy recovers $p(a)$~\citep{zhu2025uwm}, ensuring full training distribution coverage. We pre-train CSPs across \emph{all} noise levels, enabling smooth interpolation between these extremes. Therefore, the tradeoff between action coverage and conditioning fidelity can be selected at inference time.

Finally, having established a pre-training framework that exposes this tradeoff, the central challenge becomes how best to harness this flexible initialization during downstream RL finetuning. During RL fine-tuning, the optimal amount of context conditioning necessary---and therefore the balance between broad exploration and precise execution---can vary dramatically even within a single trajectory. To fully exploit our pre-trained CSPs, we introduce \emph{Timestep-Modulated Reinforcement Learning} (\method) to dynamically adjust the diffusion timestep during RL. This mechanism provides an RL agent with an explicit control variable that modulates conditioning strength, enabling it to interpolate between conditional and marginal behaviors in the pre-trained policy and thereby explore more effectively. 
Ideally, a policy \emph{learns} when to rely on precise imitation and when to broaden its action support. In practice, TMRL is simple and easy to implement, enabling steering/fine-tuning of diverse policies, from state-input diffusion policies to vision-language-action (VLA) models.

We evaluate our proposed methodology across a range of simulated and real-world robotic tasks. First, we show that CSPs alone substantially improve action coverage and achieve stronger zero-shot performance on unseen tasks over standard BC and prior pre-training approaches. \method then converts this coverage advantage into significantly better RL sample efficiency on manipulation and navigation tasks in simulation, outperforming state-of-the-art steering methods even when the base policy contains sparse behavioral coverage. We further show that context-smoothing extends naturally to VLA and 3D-input policies, where noising VLA embeddings and point clouds, respectively, enables broader exploration across image-input and dexterous manipulation tasks. Finally, we demonstrate that our approach scales to the real world, enabling rapid RL of manipulation behaviors within \textit{an hour} of experiment time while the baseline achieves near-zero success. 
\section{Related Work}
Existing research on RL fine-tuning in robotics spans a wide spectrum of topics. In this section, we review these paradigms and situate both context-smoothed policies and \method\ within them. Specifically, we differentiate our approach in (i) the design of pre-training objectives for action coverage, (ii) the explicit, dynamic tuning of exploration optimism during post-training, and (iii) the role of restricted information in inducing structured state aliasing.

\textbf{Pre-training for RL Fine-tuning.}
Standard practice in robot learning involves pre-training policies on large-scale demonstration datasets via offline RL or BC, followed by RL fine-tuning~\citep{wagenmaker2025posteriorbehavioralcloningpretraining, wagenmaker2025steering, intelligence2025pi06vla, zhang2023bootstrap, zhang2024sprint, luo2025precise, zhang2024extract, hu2025flare, yin2025sgft, yang2023robofume, lei2026rl100performantroboticmanipulation, luo2025precise, li2025grrlgoingdexterousprecise}. However, these imitation objectives learn the narrow conditional action distribution, $p(a \mid c)$. In low-density or out-of-distribution contexts, this conditional distribution overcommits to observed data, collapsing its support and assigning near-zero probability to potentially optimal actions.

Recent findings in language modeling suggest that standard imitation losses are a poor predictor of downstream fine-tuning performance~\citep{springer2025overtrained, chen2025rethinking, zeng2025pretrainingindicatorsreliablypredict, chen2026the}. Instead, PostBC~\citep{wagenmaker2025posteriorbehavioralcloningpretraining} demonstrates that successful RL fine-tuning of robot policies requires sufficient \emph{action coverage} to enable RL to sample and amplify optimal behaviors. 
PostBC specifically fits the posterior distribution of the demonstrator's behavior using an ensemble of single-action policies, then injects Gaussian noise into the actions at low-density states to broaden the pre-training distribution by using a covariance estimate from the ensemble. 
While mathematically sufficient for coverage, injecting unstructured noise directly into the action space often leads to ``dithering'' and execution-level incoherence, as the noise is typically uncorrelated across timesteps.

Instead, we introduce \emph{context-smoothed policies}. Rather than perturbing actions, we inject sampled diffusion noise into the \emph{input} space of a BC policy during pre-training. By shifting the perturbation to the context, we enable an RL agent to borrow coherent action sequences from nearby contexts, the marginal, or a mix of the two via diffusion timestep interpolation, guaranteeing exploratory coverage without the dithering inherent to unstructured action-space noise.

\textbf{RL Fine-tuning and Policy Steering.}
Once pre-trained, policies are often improved via RL fine-tuning. Some approaches apply entropy regularization or action-noise strategies in offline or fine-tuning settings~\citep{wu2019behavior, kumar2020CQL, pmlr-v202-ball23a}, or directly fine-tune pre-trained diffusion policies~\citep{ren2025diffusion}, but still rely on standard BC policies, which may overfit and perform poorly in unseen deployment settings. More structured methods perform \emph{steering} over a frozen pre-trained policy~\citep{pertsch2020spirl, singh2021parrot, ajay2021opal, pertsch2021skild, zhang2024extract, wagenmaker2025steering, dong2025expo}, training a high-level RL policy to effectively modulate the base policy's output. However, in these settings, steering is strictly limited to the support of the base policy's conditional action distribution $p(a \mid c)$, so if $p(a\mid c)$ collapses in unseen contexts, as standard BC policies routinely do, the optimal action is unreachable regardless of how well the RL policy is trained. \method\ addresses this collapse by allowing the agent to dynamically tune its reliance on context conditioning, interpolating between $p(a \mid c)$ and the marginal distribution $p(a)$, enabling adaptive exploration beyond the base policy while remaining grounded in offline data.

\textbf{Generalization via Restricted Information.}
Finally, context-smoothed policies align with work showing that restricting input information can improve policy generalization and robustness. Prior work demonstrated that intentionally limiting observations can lead to better robustness in control, imitation learning, and image generation~\citep{tomar2023ignorance, goyal2023infobottransferexplorationinformation, song2025historyguided, chen2025diffusion, zhang2025peek, zisselman2025blindfolded, hu2025stemob, mirjalili2026augmented, li2025controlvla}. For example, Stem-Ob~\citep{hu2025stemob} trains policies with diffusion-noised visual conditioning to improve zero-shot generalization. 
Most similar to ours, Classifier-Free Guidance~\citep{ho2022classifierfreediffusionguidance} can also improve generalization by training both a conditional and a marginal policy, where the conditional policy \emph{guides} the marginal policy, but our ablations in \Cref{sec:exp:ablations} demonstrate that CSPs are a more effective pre-training procedure for RL fine-tuning on new tasks.

A closely related idea is that of \emph{randomized smoothing}, used in stochastic optimization~\citep{duchi2012randomizedsmoothing}, for smoothing discontinuous gradients~\citep{suh2022bundled}, or to increase classifier robustness~\citep{cohen2019certified}, in which random noise is applied to smooth a vector. Closest to our analysis, \citet{block2023provable} smooth the \emph{observations} of BC policies, proving that the smoothed policy is total variation continuous and leveraging this property to derive distribution-matching guarantees for generative BC; our theory establishes an analogous TV-Lipschitz property (\Cref{thm:tv_lipschitz_overlap}), but deploys smoothing as a \emph{controllable} coverage-expansion mechanism for RL fine-tuning rather than for imitation stability. For \method, we train a \emph{context-smoothed} policy that sees conditioning (context) inputs \emph{smoothed} by a noising diffusion forward process. This context-smoothed policy serves as a steerable base policy, which \method\ trains an RL policy over, enabling explicit and tunable control over exploration optimism.

\section{The Need for Action Distribution Interpolation}
\label{sec:pedogogical_example}
We begin with a simple experiment that illustrates why we need a \emph{pre-training} procedure that can interpolate between the marginal action distribution $p(a)$ and a conditional distribution $p(a \mid c)$ when encountering unseen tasks.
In a simple 2D point-maze from the simulated OGBench benchmark~\citep{ogbench_park2025}, we train two policies to approximate $p(a)$ and $p(a\mid c)$  where $c$ is the goal position.
These policies are trained on the \texttt{pointmaze-large-navigate} from OGBench~\citep{ogbench_park2025} and then evaluated on a larger, out-of-distribution maze with an unseen goal position.
In \Cref{fig:pedagogical}, we visualize samples from the two distributions and the corresponding paths taken at various starting positions, gradually approaching the goal.
The agent is in \textcolor[HTML]{efbe58}{yellow} and the goal position in \textcolor[HTML]{ff83a1}{pink}. \Cref{fig:pedagogical} demonstrates that when the agent is far away, \pedcond\ is overfit to the training distribution and the sampled action sequences lead it \emph{further away} from the goal.
Meanwhile, $p(a)$'s action coverage is very broad, but does place more probability mass on actions that take the agent closer to the goal.
As the agent gets closer to the goal, $p(a \mid c)$ becomes more useful for reaching the goal.

\begin{figure}[htb]
    \centering
    \includegraphics[width=\linewidth]{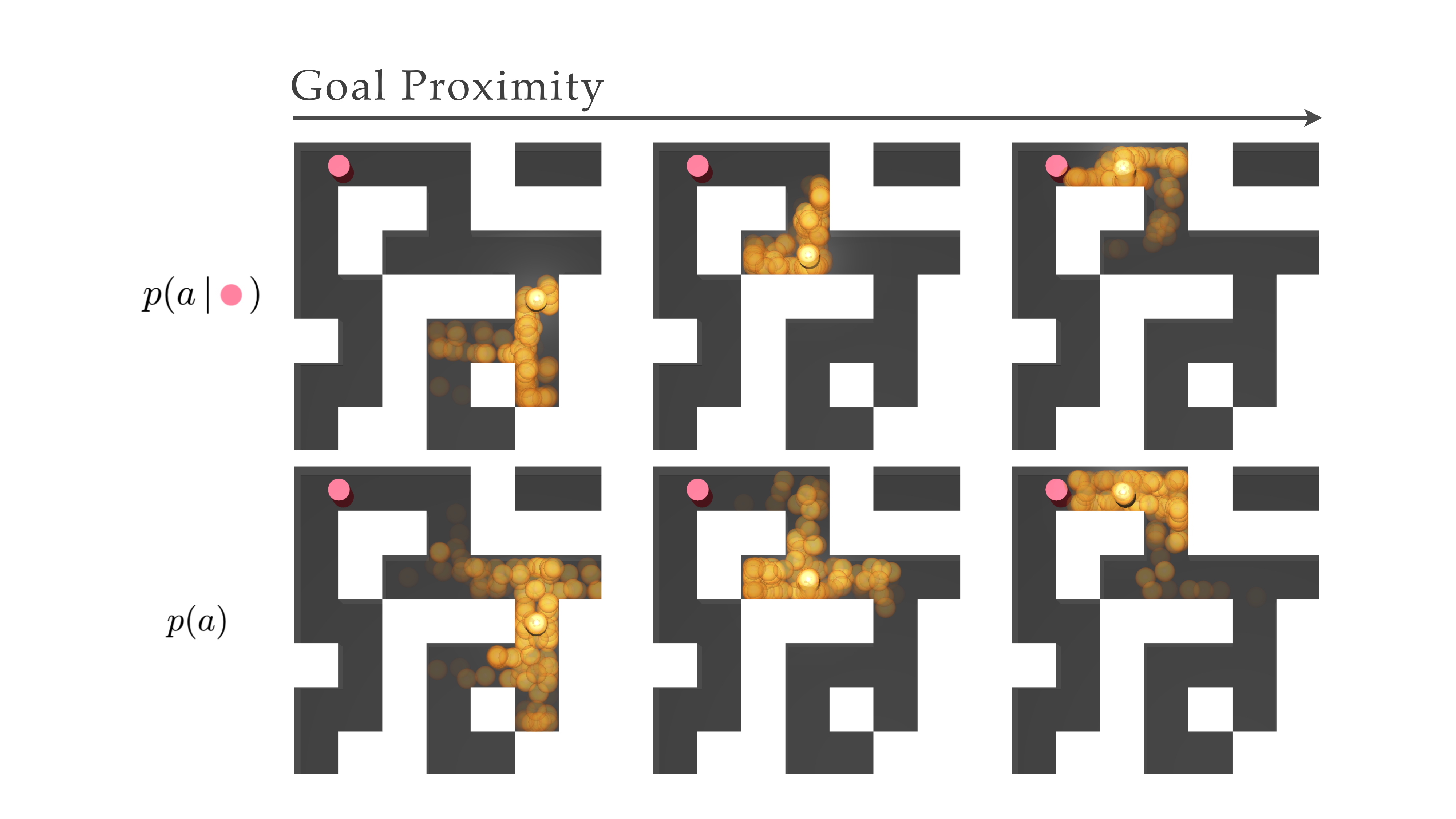}
    \caption{For an unseen \goal\ position, the conditional action distribution \protect\pedcond\ remains narrow, lacking sufficient support for the \agent\ to reach the \goal. In contrast, the marginal $p(a)$ provides the required coverage, though with too much action variance near the goal. In this setting, an optimal policy should more closely match $p(a)$ at the beginning of the trajectory and \protect\pedcond\ at the end. \emph{Context-smoothed} policy pre-training enables TMRL to \emph{interpolate} between both distributions for more effective RL fine-tuning.
    }
    \label{fig:pedagogical}
\end{figure}

\textbf{Takeaways for RL.} In general cases where $p(a \mid c)$ is overfit, $p(a)$ represents an action distribution that is more likely to cover the optimal action distribution necessary to solve the task. 
However, a pre-training recipe that na\"{i}vely trains both policies directly is insufficient for two reasons: (1) it is unclear how a downstream RL algorithm can decide when to pick each distribution to solve a given downstream task; and (2) while $p(a)$ can provide greater coverage of the optimal action, it represents an overly broad distribution, which can make downstream learning slow and unsafe.

To resolve both problems, we propose pre-training a policy via \textbf{context-smoothing} to interpolate between $p(a \mid c)$ and $p(a)$. Then, at inference time, we fine-tune with \textbf{TMRL} to learn \textit{when} and \textit{how} to move along this interpolation, dynamically adjusting coverage to explore effectively while simultaneously learning the new task. At convergence, TMRL interpolates between $p(a)$ and $p(a \mid c)$ to maximize optimal action coverage (2) and also to determine when to lean more on either distribution to solve the task (1).

\begin{figure*}[t]
    \centering
    \includegraphics[width=0.95\linewidth]{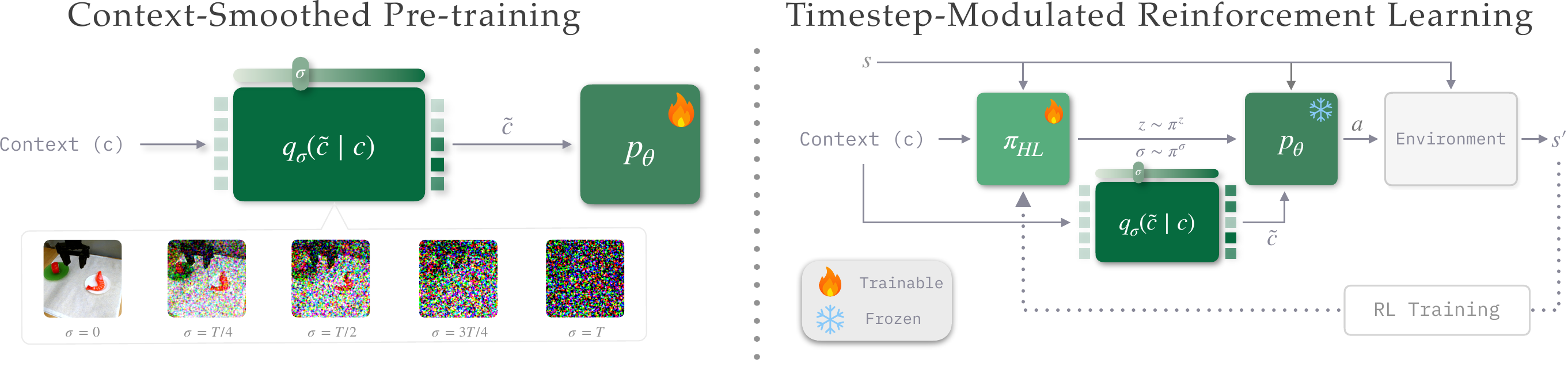}
    \caption{\textbf{Timestep-modulated exploration via context smoothing.} (\textit{Left}) During pre-training, a steerable policy $p_\theta$ is trained across all noise levels $\sigma$ by corrupting the context $c$ via the kernel $q_\sigma(\tilde{c}\mid c)$, producing a policy that can be queried at any conditioning strength during inference. (\textit{Right}) During RL fine-tuning, \method exposes $p_\theta$ with a \emph{context-noise dial} $\sigma$ as an explicit control variable for the high-level policy $\pi_{\text{HL}}$.}
    \label{fig:method}
\end{figure*}

\section{Timestep Modulated RL on Context-Smoothed Policies}
\label{sec:method}

\subsection{Problem setting}
\label{sec:method:setting}

\textbf{Pre-training.} We assume access to an offline dataset of near-optimal pre-training trajectories
$\mathcal{D}=\{\tau_j\}$, where each trajectory $\tau_j=\{(c_i,a_i)\}_{i=1}^{T_j}$ contains context--action pairs.
In this paper, we use context to abstractly define a variety of inputs to the policy, such as states, RGB images, point clouds, language instructions, and other task descriptors. 
More generally than maximum likelihood, we learn a policy via a supervised objective with a user-specified loss $\ell$:
\begin{equation}
\min_\theta\ 
\mathbb{E}_{(c_t,a_t)\sim\mathcal{D}}
\Big[
\ell \big(\theta;\ c_t,a_t\big)
\Big].
\label{eq:generic_supervised}
\end{equation}
This formulation subsumes imitation learning as a special case: choosing $\ell(\theta;c,a)=-\log p_\theta(a\mid c)$ recovers maximum likelihood (behavior cloning). Other choices of $\ell$ yield different training algorithms and model classes, \eg denoising/score-matching objectives for score-based or diffusion policies. 

In this work, we will focus on a class of generative control policies (GCPs) \citep{chi2023diffusionpolicy, zhang2025affordancebasedrobotmanipulationflow, pan2025adonoisingdispellingmyths}, where policies are represented by generative models such as diffusion/flow models and are trained to maximize a lower bound of the action likelihood.\footnote{For simplicity, we will stick to the $(a, c)$ notation, although this is naturally extended to $H$-step action chunks $a^{t:t+H}$ and context histories $c_{t-H:t}$.} Note that our pre-training and fine-tuning procedures are general and do not particularly rely on GCPs. 

\textbf{RL Fine-tuning.} 
Given this pre-trained policy, our goal is to develop an efficient RL fine-tuning algorithm in novel contexts. Of particular interest in this work is the class of RL algorithms for ``steering" pre-trained policies~\citep{wagenmaker2025steering, wagenmaker2025posteriorbehavioralcloningpretraining, su2026rfs}, although our formulation is more broadly applicable. 
In GCPs, the inference procedure allows for steerability via a latent variable $z$, giving $p(a\mid c, z)$. For flow/diffusion policies, this corresponds to the initial noise used during denoising~\citep{wagenmaker2025steering}. 
Thus, the RL agent represents a high-level policy $\pi_{\mathrm{HL}}(z\mid c)$ that selects the latent variable $z$ to maximizing downstream reward $r$ sum discounted by $\gamma$ in conjunction with $p(a|c, z)$:
\begin{align}
\label{eq:hl_objective}
    \max_{\pi_{\mathrm{HL}}}\ &\mathbb{E}\Big[\sum_{t\ge 0}\gamma^t r_t \Big], \\
\quad
\mathrm{s.t.} \; \; &z_t\sim\pi_{\mathrm{HL}}(\cdot\mid c_t),
\quad
a_t^{1:H}\sim p_\theta(\cdot\mid c_t, z_t). \nonumber
\end{align}

\textbf{Action Coverage.} A fundamental bottleneck in RL fine-tuning is distribution shift: in novel contexts, the pre-trained conditional distribution $p_\theta(\cdot\mid c,z)$ often overcommits, resulting in a collapsed action distribution. \citet{wagenmaker2025posteriorbehavioralcloningpretraining} formalizes this issue through the concept of \emph{demonstrator action coverage}. Specifically, a pre-trained policy achieves coverage with parameter $\kappa > 0$ if, $\forall (c, a)$:
\begin{equation}
p_\theta(a \mid c) \ge \kappa \cdot p^\beta(a \mid c),
\label{eq:coverage_def}
\end{equation} where $p^\beta$ represents the true demonstrator policy.
This coverage is a strict prerequisite for successful RL fine-tuning. Since fine-tuning algorithms rely on online rollouts to update behaviors, a collapsed initialization (where $\kappa \to 0$ for unseen optimal actions) will never sample the transitions required to discover meaningful reward signal. 
Because standard BC fails to guarantee meaningful coverage in low-density regions, it often yields initializations incapable of downstream learning.
Our method resolves this by instantiating a new class of RL fine-tuning algorithms based on ``context-smoothed pre-training'', which allows the RL agent to adaptively modulate the amount of action coverage needed for different contexts $c$.

\subsection{Context-smoothing}
\label{sec:method:cs}

Intuitively, context-smoothed policies make a simple change to standard policy pre-training: they inject noise into the context $c$ while being pre-trained on the offline dataset $\mathcal{D}$. We show that doing so enables policies to adaptively expand their action coverage during evaluation and RL fine-tuning.

To define this formally - let $q_\sigma(\tilde c\mid c)$ be a corruption kernel that injects noise into the context, with noise scale $\sigma\ge 0$. We define the \emph{context-smoothed} steerable policy as the mixture
\begin{align*}
p_{\theta,\sigma}(a\mid c,z)
&~\coloneqq~
\mathbb{E}_{\tilde c\sim q_\sigma(\cdot\mid c)}
\Big[p_\theta(a\mid \tilde c, z)\Big] \nonumber \\
&~=~
\int p_\theta(a\mid \tilde c,z)\,q_\sigma(\tilde c\mid c)\,d\tilde c.
\end{align*}
We refer to $p_{\theta,\sigma}$ as a \emph{context-smoothed policy}. Given a context $c$, action inference with a context-smoothed policy amounts to corrupting the context with $\tilde c \sim q_\sigma(\tilde c\mid c)$, and then sampling actions with the corrupted context from $p_\theta(a\mid \tilde c,z)$. 

\noindent\textbf{Intuition behind context smoothed policies:} For $\sigma\downarrow 0$, $q_\sigma(\tilde c\mid c)$ concentrates near $c$, so $p_{\theta,\sigma}(\cdot\mid c,z)$ approaches the original conditional controller.
As $\sigma$ increases, the corrupted context $\tilde c$ becomes less informative, and the induced action distribution becomes a broader mixture over behaviors associated with \emph{nearby / aliased} contexts.
This controlled interpolation between the conditional distribution $p(a|c)$ and the marginal $p(a)$ creates \emph{structured action coverage expansion}: instead of random action noise, the policy can borrow coherent action chunks from related contexts present in the dataset.

\noindent\paragraph{Implementation of corruption kernel $q_\sigma(\tilde c\mid c)$}
We define the corruption kernel $q_\sigma$ via an iterative forward-noising process over contexts, much like the forward process of a diffusion model. Let $c_0 \equiv c$ and choose a variance schedule $\{\beta_t\}_{t=1}^{T_c}$ with $\alpha_t\!=\!1-\beta_t$ and
$\bar\alpha_t \coloneqq \prod_{i=1}^t \alpha_i$.
The forward process implies the closed-form marginal:
$q(c_t \mid c_0)
=
\mathcal{N}\!\big(\sqrt{\bar\alpha_t}\,c_0,\ (1-\bar\alpha_t)I\big).$
We take the corruption kernel at noise level $t_c$ to be
\begin{equation}
q_\sigma(\tilde c \mid c)
\equiv
q_{t_c}(\tilde c \mid c)
=
\mathcal{N}\!\big(\sqrt{\bar\alpha_{t_c}}\,c,\ (1-\bar\alpha_{t_c})I\big).
\label{eq:corruption_kernel_noise}
\end{equation}
Since the forward corruption process is entirely determined by how many time-steps it is run forward, the noise-level $\sigma$ for $q_\sigma(\tilde c\mid c)$ can be parameterized entirely by controlling the \textbf{diffusion timestep} $t_c\in\{0,\dots,T_c\}$, hence the moniker timestep modulated RL. We use $\sigma$ to refer to corruption kernels $q_\sigma$ generally and $t_c$ for our specific instantiation with diffusion noise.

\subsection{Pre-training a context-smoothed policy}
\label{sec:method:training}

Training a context-smoothed policy requires learning action predictors conditional on corrupted context - $p_\theta(a\mid \tilde c, z)$. Towards realizability, we train a policy class that can be queried at arbitrary context-noise levels by providing $\sigma$ (or $t_c$) as an explicit input to the policy - $p_\theta(a\mid \tilde c, z, \sigma)$. Concretely, we introduce a policy - $p_\theta(a\mid \tilde c, z, \sigma)$, and train it so that, for each training pair $(c,a) \sim \mathcal{D}$, we maximize action likelihood while noising the context with a small modification to \Cref{eq:generic_supervised}:
\begin{equation}
\min_\theta\ 
\mathbb{E}_{(c,a)\sim\mathcal{D}}
\ \mathbb{E}_{\substack{\sigma\sim\mathcal{S} \\ \tilde c\sim q_\sigma(\cdot\mid c)}}
\
\Big[
\ell\big(\theta;\ a, \tilde c, \sigma\big)
\Big],
\label{eq:train_generic}
\end{equation}
where $\mathcal{S}$ is a distribution over noise levels and $\ell$ is any supervised learning objective suitable for the chosen steerable policy class.
In our specific instantiation using GCPs as $p_\theta$, $\ell$ is the denoising score-matching or flow-matching loss \citep{song2019generativemodeling, ho2020denoising, song2021denoising} and $\sigma$ corresponds to the diffusion noise timestep used to corrupt $\tilde{c}$, $t_c$. Notably, this is \emph{not} a much more complicated procedure than standard imitation learning; it merely introduces \emph{controlled context smoothing} during pre-training, similar to data augmentation, and conditioning $p_\theta$ on the amount of smoothing $\sigma$. As we show next, this context-smoothed pre-trained policy can result in a massive improvement for efficient RL fine-tuning. See \Cref{alg:cs_train} for pseudocode.

\begin{algorithm}[t]
\caption{Context-smoothed pre-training.}
\label{alg:cs_train}
\small
\begin{algorithmic}[1]
\STATE \textbf{Input:} dataset $\mathcal{D}=\{(c,a^{1:H})\}$, corruption kernel $q_\sigma$, noise-level sampler $\sigma\sim\mathcal{S}$
\STATE \textbf{Initialize:} parameters $\theta$ of $p_\theta(a^{1:H}\mid c,z,\sigma)$ (and any latent prior over $z$ if used)
\WHILE{not converged}
    \STATE Sample $(c,a^{1:H})\sim\mathcal{D}$
    \STATE Sample $\sigma\sim\mathcal{S}$ (diffusion timestep $t_c$ in our instantiation)
    \STATE Sample $\tilde c\sim q_\sigma(\cdot\mid c)$ (\cref{eq:corruption_kernel_noise})
    \STATE Update $\theta$ to minimize $\ell(\theta;\ a^{1:H},\tilde c,\sigma)$ (\cref{eq:train_generic})
\ENDWHILE
\end{algorithmic}
\end{algorithm}

\begin{figure*}[!h]
    \centering
    \includegraphics[width=0.9\linewidth]{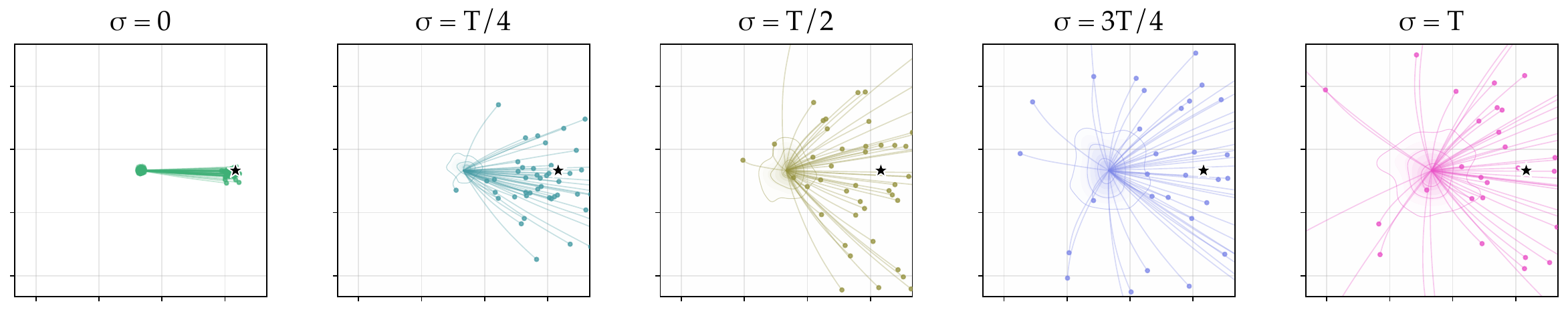}
    \caption{\textbf{Empirical validation of context smoothing.}
    We train a context-smoothed diffusion policy $p(x, y \mid \tilde{\theta}, \sigma)$ with \Cref{eq:train_generic} to produce 2D points on a unit circle conditioned on $c = \theta$, where $\sigma \in {0, 1, \ldots, T}$ denotes diffusion scheduler timesteps. Each panel shows $p(x, y \mid \tilde{\theta} = 0, \sigma)$ at a fixed level of context noise $\sigma$; the shaded region and contours visualize the spread of the conditioning distribution over context inputs at each $\sigma$. As $\sigma \rightarrow 0$, conditioning sharpens — the contours collapse to a point — and outputs concentrate around the unit circle at $\theta = 0$; as $\sigma$ increases, the contours widen, reflecting an induced action distribution that becomes broader and overlaps more across nearby contexts.}
    \label{fig:borrowing-exp}
\end{figure*}

\subsection{RL fine-tuning with timestep-modulated context smoothing}
\label{sec:method:rl}

Compared to standard pre-training, training a context-smoothed policy $p_{\theta}(a^{1:H}\mid c,z, \sigma)$ naturally provides an avenue to expand the exploration space during RL fine-tuning. For example, in \emph{steering algorithms}~\citep{wagenmaker2025steering}, which RL fine-tune by training a high-level RL policy to select the latent noise $z$ to initialize action selection from a GCP (see \Cref{sec:method:setting} for a refresher), the high-level policy can now select the steering vector $z$ \emph{and} the context-smoothing noise level $\sigma$ applied to $c$:
\begin{equation*}
\resizebox{0.91\columnwidth}{!}{$
(z_t,\sigma_t)\sim \pi_{\mathrm{HL}}(\cdot\mid s_t),
\hspace{0.1cm}
\tilde c_t \sim q_{\sigma_t}(\cdot \mid c(s_t)),
\hspace{0.1cm}
a_t \sim p_{\theta}(\cdot \mid \tilde c_t, z_t, \sigma_t).
$}
\label{eq:hl_sigma}
\end{equation*}

With a frozen context smoothed base policy, we optimize $\pi_{\mathrm{HL}}(z,\sigma\mid s)$ using any RL method (\eg off-policy actor-critic~\citep{haarnoja2018sac}) to maximize the expected return in \Cref{eq:hl_objective}. Intuitively, $\sigma$ provides an exploration--exploitation, action coverage (\cref{eq:coverage_def}) dial: large $\sigma$ aliases contexts (broader borrowing, better exploration, more coverage), while small $\sigma$ sharpens conditioning (better exploitation, lower action coverage) once progress is found. See \Cref{alg:tmrl} for pseudocode.

\begin{algorithm}[t]
\caption{Timestep-Modulated RL (TMRL), simplified.}
\label{alg:tmrl}
\small
\begin{algorithmic}[1]
\STATE \textbf{Input:} pretrained context-smoothed controller $p_\theta(\cdot\mid c,z,\sigma)$; initialize replay buffer $\mathcal{B}$
\STATE Initialize high-level actor $\pi_{\mathrm{HL}}(z,\sigma\mid s)$ and critic(s)
\FOR{each training iteration}
    \STATE Observe $s_t$ and set $c_t\leftarrow c(s_t)$
    \STATE Sample $(z_t,\sigma_t)\sim \pi_{\mathrm{HL}}(\cdot\mid s_t)$, $\tilde{c}_t \leftarrow$ $c_t$ smoothed with $\sigma_t$
    \STATE Sample and execute action chunk $a_t^{1:H}\sim p_\theta(\cdot\mid \tilde{c}_t,z_t,\sigma_t)$
    \STATE Store transitions in $\mathcal{B}$ and update critics with off-policy RL
    \STATE Update $\pi_{\mathrm{HL}}$ using critic gradient (details in App. \Cref{sec:appendix:impl_details})
\ENDFOR
\end{algorithmic}
\end{algorithm}

\section{Theoretical Analysis: Benefits of Context Smoothing}
\label{sec:method:analysis}

In this section, we provide theoretical evidence and empirical analysis to better understand the gains from timestep-modulated, context-smoothed policies for RL fine-tuning. 

\paragraph{Empirical Analysis}

We first empirically validate that increasing $\sigma$ (or $t_c$) yields smooth, meaningful behavioral changes. We show in \Cref{fig:borrowing-exp} that as corruption noise increases, the distribution of actions at a given context broadens, thereby increasing overlap with nearby contexts. Taken to the extreme, this shows interpolation between the conditional $p(a \mid c)$ and marginal $p(a)$ action distribution.

\paragraph{Theoretical Analysis}
Our theoretical analysis formalizes how, under Gaussian context-smoothing, smoothing mitigates the coverage collapse of standard BC.
Informally, we make two claims: \textbf{(i)} increasing context noise increases action distribution overlap across different contexts, and \textbf{(ii)} closer contexts overlap more than farther contexts. 
Referring back to \Cref{eq:coverage_def}, these claims together show that at an individual context $c$, if there is a different context $c'$ which increases demonstrator action coverage at $c$, then increasing context corruption $\sigma$ increases overlap with the higher action coverage distribution, $p(\cdot \mid c')$. 

\paragraph{Setup (Gaussian context smoothing)}
\label{sec:method:gaussian_context_smoothing_setup}
For a conditional action distribution $p(\cdot\mid c)$ with $c\in\mathbb{R}^d$, consider its Gaussian context-smoothed version: 
\begin{equation}
    p_\sigma(\cdot\mid c) \ \coloneqq\ \mathbb{E}_{w\sim\mathcal{N}(0,I_d)}\big[p(\cdot\mid c+\sigma w)\big].
    \label{eq:gaussian_smoothing_def}
\end{equation}
Let $\mathrm{TV}(\cdot,\cdot)$ denote total variation distance, and define overlap $\mathrm{Ov}(P,Q)\coloneqq 1-\mathrm{TV}(P,Q)$. 
This smoothed-policy construction mirrors randomized smoothing, which certifies classifier robustness by Gaussian-noising inputs~\citep{cohen2019certified}, and the observation-smoothed policies of \citet{block2023provable}, who show that smoothing endows any policy with \emph{total variation continuity} (TVC) and use this property to derive imitation guarantees for generative BC. Whereas these works employ smoothing for robustness and distributional stability, we use it to expand and adaptively control action coverage for RL fine-tuning.
We make the following statements about $p_\sigma(\cdot\mid c)$.

\begin{theorem}[Smoothing increases overlap and makes the policy Lipschitz in context]
\label{thm:tv_lipschitz_overlap}
For any two contexts $c,c'\in\mathbb{R}^d$,
\begin{equation}
    \mathrm{TV}\!\big(p_\sigma(\cdot\mid c),p_\sigma(\cdot\mid c')\big) ~\le~ \frac{\mathbb{E}\|w\|}{\sigma}\,\|c-c'\|.
    \label{eq:tv_lipschitz}
\end{equation}
Equivalently,
\begin{equation}
\mathrm{Ov}\!\big(p_\sigma(\cdot\mid c),p_\sigma(\cdot\mid c')\big) ~\ge~ 1-\frac{\mathbb{E}\|w\|}{\sigma}\,\|c-c'\|. 
\label{eq:ov_lipschitz}
\end{equation}
\end{theorem}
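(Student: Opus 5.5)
The plan is to reduce the statement to a comparison of two Gaussians via the data-processing inequality for total variation, and then bound the Gaussian total variation explicitly. Observe from \Cref{eq:gaussian_smoothing_def} that $p_\sigma(\cdot\mid c)=\int p(\cdot\mid x)\,\mathcal{N}(x;c,\sigma^2 I_d)\,dx$. So $p_\sigma(\cdot\mid c)$ is the image of the Gaussian $\mu_c\coloneqq\mathcal{N}(c,\sigma^2I_d)$ under the Markov kernel $x\mapsto p(\cdot\mid x)$, and $p_\sigma(\cdot\mid c')$ is the image of $\mu_{c'}$ under the \emph{same} kernel. Here we assume $(x,a)\mapsto p(a\mid x)$ is jointly measurable, so the kernel is well defined.

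\textbf{Step 1 (data processing).} Applying a common Markov kernel cannot increase total variation. Concretely, for any measurable event $A$,
\[
p_\sigma(A\mid c)-p_\sigma(A\mid c')=\int p(A\mid x)\,\big(\mu_c-\mu_{c'}\big)(dx).
\]
Since $0\le p(A\mid x)\le 1$, this integral is at most $\sup_{B}\big(\mu_c(B)-\mu_{c'}(B)\big)=\mathrm{TV}(\mu_c,\mu_{c'})$. Taking the supremum over $A$ gives
\[
\mathrm{TV}\big(p_\sigma(\cdot\mid c),p_\sigma(\cdot\mid c')\big)\le\mathrm{TV}\big(\mu_c,\mu_{c'}\big).
\]

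\textbf{Step 2 (Gaussian TV).} Let $\delta=\|c-c'\|$. By rotation and translation invariance, we can project onto the direction $c-c'$; the orthogonal coordinates have identical laws under both measures and drop out. This gives $\mathrm{TV}(\mu_c,\mu_{c'})=\mathrm{TV}\big(\mathcal{N}(0,\sigma^2),\mathcal{N}(\delta,\sigma^2)\big)=2\Phi\!\big(\tfrac{\delta}{2\sigma}\big)-1$, where $\Phi$ is the standard normal CDF. Since the standard normal density $\varphi$ is maximized at $0$, we have $2\Phi(x)-1=2\int_0^x\varphi\le 2x\varphi(0)=x\sqrt{2/\pi}$. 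Hence
\[
\mathrm{TV}(\mu_c,\mu_{c'})\le \frac{\delta}{\sigma\sqrt{2\pi}}.
\]

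\textbf{Step 3 (compare constants).} It remains to show $1/\sqrt{2\pi}\le\mathbb{E}\|w\|$ for $w\sim\mathcal{N}(0,I_d)$. This holds because $\mathbb{E}\|w\|\ge\mathbb{E}|w_1|=\sqrt{2/\pi}$, which exceeds $1/\sqrt{2\pi}$ by a factor of $2$. This yields \Cref{eq:tv_lipschitz}, and \Cref{eq:ov_lipschitz} follows immediately from the definition $\mathrm{Ov}=1-\mathrm{TV}$.

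An alternative route that produces exactly the constant $\mathbb{E}\|w\|$ is a path argument. One differentiates $\mu_{c+s v}$ in $s$ and uses $\nabla_c\,\mathcal{N}(x;c,\sigma^2I)=\mathcal{N}(x;c,\sigma^2I)\,(x-c)/\sigma^2$. This bounds $\tfrac{d}{ds}\mathrm{TV}$ by $\tfrac12\,\mathbb{E}|\langle w,v\rangle|/\sigma\le\mathbb{E}\|w\|\,\|v\|/\sigma$, which one then integrates over $s\in[0,\delta]$. The only step needing care is Step 1, namely the measurability of the kernel and the correct use of the supremum over events. The Gaussian computation is routine, and the statement as given has slack in the constant, so I expect no genuine obstacle.
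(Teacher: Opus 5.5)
Your proof is correct, and your main route differs genuinely from the paper's.

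The paper argues event by event. With $f_A(x)=p(A\mid x)$ and $g_A(c)=p_\sigma(A\mid c)$, it uses Stein's identity $\nabla g_A(c)=\sigma^{-1}\mathbb{E}[f_A(c+\sigma w)\,w]$. It then bounds $\|\nabla g_A\|\le\mathbb{E}\|w\|/\sigma$ by Jensen and $0\le f_A\le 1$, concludes that $g_A$ is Lipschitz, and takes $\sup_A$.

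You instead remove the base policy entirely with the data-processing inequality. What remains is $\mathrm{TV}(\mathcal{N}(c,\sigma^2I_d),\mathcal{N}(c',\sigma^2I_d))=2\Phi(\|c-c'\|/(2\sigma))-1$, which you evaluate exactly. This buys you a strictly sharper, dimension-free constant: $1/\sqrt{2\pi}$ in place of $\mathbb{E}\|w\|\approx\sqrt{d}$. It also identifies the worst case over all base policies. The quantity $2\Phi(\|c-c'\|/(2\sigma))-1$ is attained when $p(\cdot\mid x)$ is a point mass determined by which side of the perpendicular bisector of $c$ and $c'$ the point $x$ lies on. So your Step 3 serves only to match the paper's looser constant.

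The paper's route gives, in exchange, a pointwise gradient bound: each $c\mapsto p_\sigma(A\mid c)$ is differentiable. That is the natural form of the ``Lipschitz in context'' claim.

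Your secondary path argument is essentially the paper's Stein computation carried out on densities, and it shows where the paper loses tightness. Suppose one bounds the directional derivative $\langle\nabla g_A,v\rangle$ and centers $f_A$ at $\tfrac12$ (using $\mathbb{E}\langle w,v\rangle=0$), instead of applying Jensen to the full gradient norm. The paper's route then also yields $1/(\sigma\sqrt{2\pi})$.
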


\noindent\underline{\emph{Interpretation.}} \Cref{thm:tv_lipschitz_overlap} formalizes two central components of maximizing action coverage for RL fine-tuning:

\begin{itemize}

\item \textbf{More noising $\Rightarrow$ broader coverage.} For fixed $(c,c')$, increasing $\sigma$ decreases the upper bound in \Cref{eq:tv_lipschitz} and increases the lower bound in \Cref{eq:ov_lipschitz}. For example, if $c'$ is a state from the demonstration data and $c$ is a novel, out-of-distribution state, increasing $\sigma$ forces the policy at $c$ to overlap with the demonstrator's behavior at $c'$, thereby \emph{increasing demonstrator action coverage}.

\item \textbf{Closer contexts $\Rightarrow$ targeted, coherent coverage.} For fixed $\sigma$, the bound scales linearly with $\|c-c'\|$: nearby contexts yield larger guaranteed overlap than distant contexts. This ensures that the \emph{coverage expansion is not simply unstructured noise, but structured aliasing}: the policy borrows action chunks from related contexts.
\end{itemize}

See proof in Appendix \Cref{sec:appendix:proof}. The next corollary now uses \Cref{thm:tv_lipschitz_overlap} to demonstrate that if two contexts exhibit overlap \emph{before smoothing}, they will exhibit even greater overlap \emph{after smoothing}:


\begin{corollary}[Sufficiently noisy smoothing ensures increased coverage relative to the base, unsmoothed policy]
\label{cor:overlap_threshold}
Fix $c,c'$ and let $\Delta=c-c'$. Assume the pair $(c,c')$ satisfies a (local) identifiability lower bound:
\begin{equation}
    \mathrm{TV}\!\big(p(\cdot\mid c),p(\cdot\mid c')\big)\ \ge\ m\|\Delta\|
    \label{eq:identifiability}
\end{equation}
for some $m>0$.
If
\begin{equation}
    \sigma\ \ge\ \frac{\mathbb{E}\|w\|}{m},
    \label{eq:sigma_threshold}
\end{equation}
then smoothing reduces distinguishability (TV) and hence increases overlap:
\begin{equation}
    \mathrm{Ov}\!\big(p_\sigma(\cdot\mid c),p_\sigma(\cdot\mid c')\big) ~\ge~ \mathrm{Ov}\!\big(p(\cdot\mid c),p(\cdot\mid c')\big).
    \label{eq:overlap_increase}
\end{equation}
\end{corollary}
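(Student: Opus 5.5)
The corollary follows by chaining the Lipschitz bound of \Cref{thm:tv_lipschitz_overlap} with the identifiability assumption \eqref{eq:identifiability}. We take \Cref{thm:tv_lipschitz_overlap} as given for the fixed pair $(c,c')$ with $\Delta=c-c'$. If $\Delta=0$, both sides of \eqref{eq:overlap_increase} equal $1$, since $p_\sigma(\cdot\mid c)=p_\sigma(\cdot\mid c')$ and $p(\cdot\mid c)=p(\cdot\mid c')$; so we assume $\Delta\neq 0$.

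First, from \eqref{eq:tv_lipschitz} we have $\mathrm{TV}(p_\sigma(\cdot\mid c),p_\sigma(\cdot\mid c'))\le \frac{\mathbb{E}\|w\|}{\sigma}\|\Delta\|$. Second, the threshold \eqref{eq:sigma_threshold} with $m>0$ gives $\frac{\mathbb{E}\|w\|}{\sigma}\le m$. This uses that $\sigma>0$, which is forced because $\mathbb{E}\|w\|>0$ for $w\sim\mathcal{N}(0,I_d)$ with $d\ge1$. Multiplying by $\|\Delta\|\ge 0$ yields $\mathrm{TV}(p_\sigma(\cdot\mid c),p_\sigma(\cdot\mid c'))\le m\|\Delta\|$. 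Third, by \eqref{eq:identifiability}, $m\|\Delta\|\le \mathrm{TV}(p(\cdot\mid c),p(\cdot\mid c'))$. Chaining the three inequalities gives
\begin{equation*}
\mathrm{TV}\big(p_\sigma(\cdot\mid c),p_\sigma(\cdot\mid c')\big)\ \le\ \mathrm{TV}\big(p(\cdot\mid c),p(\cdot\mid c')\big).
\end{equation*}
Applying the order-reversing map $x\mapsto 1-x$ and using $\mathrm{Ov}=1-\mathrm{TV}$ then gives \eqref{eq:overlap_increase}.

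There is essentially no obstacle beyond bookkeeping. The substantive content sits entirely in \Cref{thm:tv_lipschitz_overlap}, which we may assume. The only points needing care are the strict positivity of $\sigma$ implied by the threshold, so that dividing by $\sigma$ is legitimate, and the degenerate case $\Delta=0$, both handled above.
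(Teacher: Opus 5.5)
Your proposal is correct and is exactly the paper's argument: chain the Theorem~1 bound, the threshold consequence $\mathbb{E}\|w\|/\sigma \le m$, and the identifiability lower bound to obtain $\mathrm{TV}(p_\sigma(\cdot\mid c),p_\sigma(\cdot\mid c')) \le \mathrm{TV}(p(\cdot\mid c),p(\cdot\mid c'))$, then pass to overlap via $\mathrm{Ov}=1-\mathrm{TV}$. Your remarks on $\sigma>0$ and on $\Delta=0$ are harmless extra bookkeeping; the chain already covers $\Delta=0$ directly, since every term is then $0$.
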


\noindent\underline{\emph{Interpretation.}} 
Standard BC can create disjoint, narrow action supports for highly identifiable contexts (identifiability \Cref{eq:identifiability}). Once $\sigma$ exceeds the threshold in \Cref{eq:sigma_threshold}, the smoothing-induced aliasing dominates in \Cref{eq:overlap_increase}. \textbf{This demonstrates that smoothing increases action coverage relative to a base BC policy}, mitigating the risk of zero-probability optimal actions during RL fine-tuning.
A complete proof appears in Appendix \Cref{sec:appendix:proof_corollary}.


\noindent\textbf{Takeaway for RL Fine-tuning.} 
We've shown that context smoothing provably increases overlap relative to the original conditional policy: for any context $c$, if $\exists \, c'$ with better demonstrator action coverage, this increased overlap directly translates into a higher coverage parameter $\kappa$ for $c$ in \Cref{eq:coverage_def}. 
These results allow the RL policy to adaptively change coverage for each $c$ by treating $\sigma$ as an adaptive coverage dial: large $\sigma$ guarantees broad action coverage by pulling the distribution toward the marginal $p(a)$, while small $\sigma$ preserves precise conditioning for exploitation.

Unlike unstructured noise, which expands coverage at the cost of execution coherence, context-smoothed policies expand coverage using coherent actions from nearby contexts.

\noindent\textbf{Summary.} In summary, we propose Timestep-Modulated Reinforcement Learning (TMRL) over \emph{context-smoothed policies}, reframing policy pre-training for RL post-training from \textit{unstructured} action noise into a \textit{structured} process of context-smoothing.
We first train a generative policy $p_\theta(a \mid \tilde{c}, z, \sigma)$ by injecting controlled noise into the context $c$ via a forward diffusion process. This creates a policy that can be queried at any noise level $\sigma \in [0, T_c]$, where $\sigma$ determines the conditioning noise and guarantees coverage over the prior data. Then, during RL fine-tuning, a high-level policy $\pi_{\text{HL}}(z, \sigma \mid s)$ learns to dynamically treat $\sigma$ as a coverage dial. By adaptively increasing $\sigma$, the agent aliases the current context with nearby contexts in the dataset, ``borrowing'' coherent behaviors to explore securely beyond the base policy's behavior distribution. Put together, our work shows the merit of rethinking the role of pre-training for finetuning, especially the responsiveness of the policy to noisy input representations. 
\begin{figure*}[!t]
    \centering
    \includegraphics[width=\textwidth]{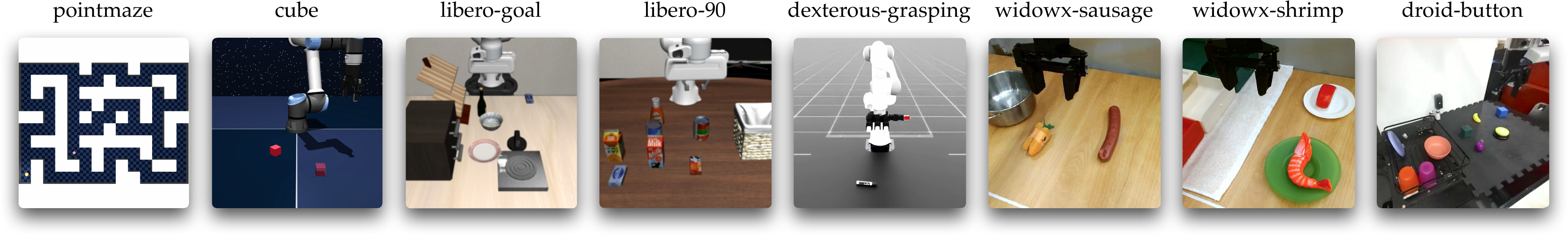}
    \caption{\textbf{Evaluation.} We evaluate \tmrl\ across 8 tasks spanning navigation and manipulation in simulation and the real world.}
    \label{fig:tasks}
\end{figure*}
\section{Experiments}

\begin{figure*}[ht]
    \centering
    \includegraphics[width=\linewidth]{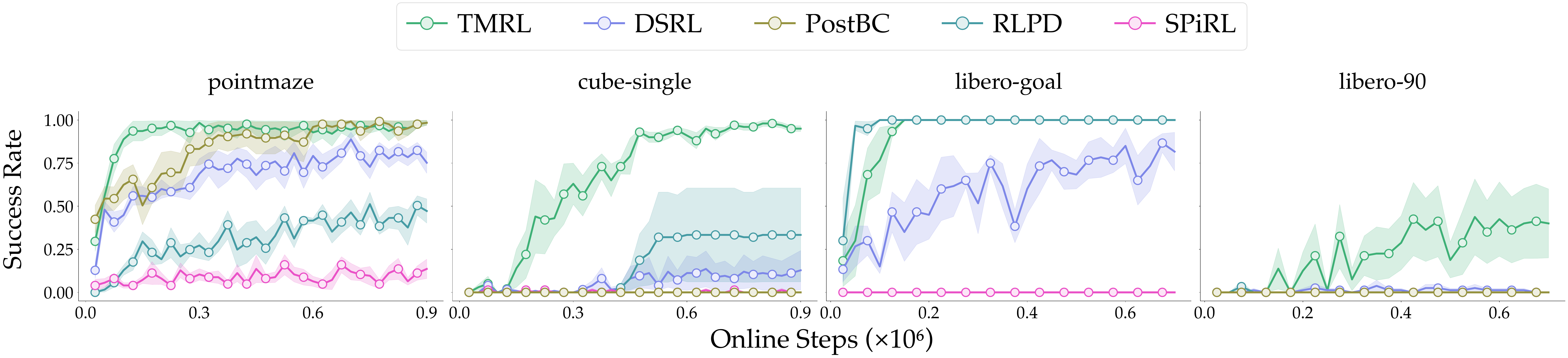}
    \caption{\textbf{RL Success Rates for simulation tasks.} \tmrl\ attains near 100\% success rate in both OGBench tasks, outperforming the best baselines by 14\% in \texttt{pointmaze-giant} and 200\% in \texttt{cube-single} at final performance. In \texttt{libero-goal}, \tmrl\ and \rlpd~\citep{pmlr-v202-ball23a} 
    both reach 100\% success. However, for the longer-horizon \texttt{libero-90} task, only \tmrl\ explores sufficiently to achieve non-trivial success rates.}
    \label{fig:sim-results}
\end{figure*}

Our experiments study the following research questions:
\begin{enumerate}[label=\textbf{(Q\arabic*)}  ]
    \item \label{q1} Does context smoothing produce better action coverage over other pre-training approaches? 
    \item \label{q2} Does \method\ effectively RL fine-tune policies trained across a variety of policy conditioning variables?
    \item \label{q3} Does \method\ enable real-world RL on VLAs?
    \item \label{q4} How does context-smoothing and \method\ compare to alternative coverage expansion mechanisms? How does \method\ learn to control context smoothing over time?
\end{enumerate}

\subsection{\ref{q1} Comparing Pre-Training Action Coverage}
\label{sec:experiments:pretraining}

We first compare the effect of various pre-training procedures on action coverage by measuring success rates on unseen tasks. We compare a context-smoothed pre-training (CSP) policy against standard BC, i.e., training $p(a \mid c)$ with \cref{eq:generic_supervised}, and against PostBC~\citep{wagenmaker2025posteriorbehavioralcloningpretraining}.
Overall, \textbf{CSP outperforms both baselines in zero-shot success rate of unseen tasks}.

We demonstrate this result on two tasks from OGBench~\citep{ogbench_park2025}: navigation (\texttt{pointmaze-giant}) and manipulation (\texttt{cube-single}). 
For our navigation task, we train policies on the \texttt{pointmaze-large-navigate} dataset and evaluate on the larger \texttt{pointmaze-giant} environment. The downstream environment has a larger state space, necessitating a broader action distribution. 
For our manipulation task, \texttt{cube-single}, we use a filtered \texttt{cube-single-play} dataset and evaluate adaptation for initial positions beyond this training dataset. See \Cref{sec:appendix:sim_exps} for further environment details.

In \Cref{fig:success-at-k}, we plot ``success @ $K$,'' measuring the fraction of out-of-distribution initial states for which at least one of the $K$ base policy rollouts succeeds.
This success rate represents a direct empirical measurement of \emph{demonstrator action coverage} from \cref{eq:coverage_def}.
On both OGBench tasks, CSP achieves higher success rates at every $K$ with a fixed smoothing $\sigma$, with the gap most pronounced on \texttt{cube} where BC and PostBC have zero success at all $K$. Next, we show that CSP's broader action coverage translates to better RL performance with TMRL.
\begin{figure}[!htb]
    \centering
    \includegraphics[width=\linewidth]{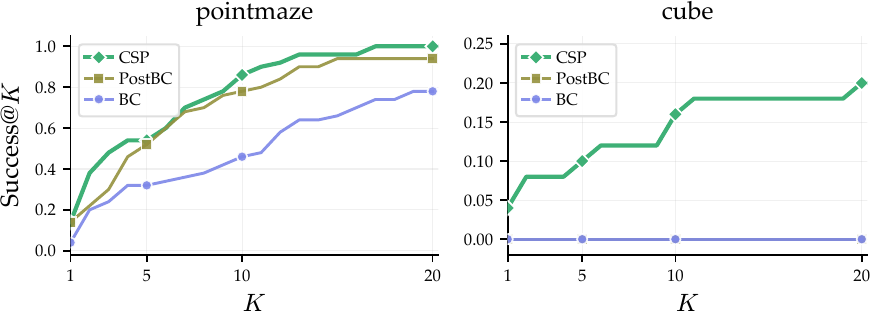}
    \caption{\textbf{CSP unlocks better action coverage before RL fine-tuning.} We measure the Success@$K$ for context-smoothed pre-training against standard BC and PostBC. CSP achieves greater success@$K$ across all $K$ on both tasks.}
    \label{fig:success-at-k}
\end{figure}

\subsection{RL Fine-tuning Baselines}
We compare \tmrl\ against prior approaches for RL with prior data or for combining pre-training and RL fine-tuning across multiple evaluation tasks, as shown in \Cref{fig:tasks}:
\begin{itemize}[leftmargin=10pt]
    \item \rlpd\ \citep{pmlr-v202-ball23a}: off-policy RL that learns online while incorporating offline data as an additional buffer. It relies on Gaussian action noise for expanding action coverage.
    \item \spirl\ \citep{pertsch2020spirl}: a hierarchical RL algorithm that trains an RL policy over pre-trained skills learned from offline data. It uses skill sampling for expanding coverage.
    \item \dsrl\ \citep{wagenmaker2025steering}: a steering algorithm that first trains a diffusion policy over action sequences with standard BC, then performs off-policy RL over the policy's noise space.
    \item \postbc\ \citep{wagenmaker2025posteriorbehavioralcloningpretraining}: pre-trains with additive Gaussian action noise to expand action coverage and then fine-tunes with DSRL.
\end{itemize}

\subsection{\ref{q2} RL Fine-tuning Across Varied Policy Conditioning}
\label{sec:experiments:sim_exps}
Now, we examine how \tmrl\ enables effective RL fine-tuning on difficult, unseen tasks. We consider downstream tasks that are out-of-distribution for the base policy, i.e., the training data contains no demonstrations for these downstream tasks. 
All results are means and standard deviations over 5 seeds.

\noindent\textbf{State-Based Conditioning.} As seen in the two left plots in \Cref{fig:sim-results}, we observe clear differences in how methods handle out-of-distribution generalization for the aforementioned OG bench domains from \Cref{sec:experiments:pretraining}. \rlpd\ achieves limited success, as it explores purely through additive Gaussian noise.

In contrast, methods that exploit action priors from the pre-training data perform better; \dsrl\ reaches around 90\% success rates in \texttt{pointmaze-giant}; however, it achieves near-0 success rates on the higher-dimensional action space task \texttt{cube-single} as it lacks any mechanism to go beyond the coverage of the base policy. \postbc\ performs very similarly to \dsrl---while it expands action coverage during pre-training, the coverage comes from single-step Gaussian noise, causing some dithering behavior, and, unlike \tmrl, cannot be adaptively controlled during RL fine-tuning. \tmrl\ consistently outperforms these baselines, achieving an overall 101\% improvement over the best-performing baseline across both tasks by \emph{learning} to systematically expand coverage of context-smoothed policies. Moreover, while several baselines exhibit high variance across seeds, \tmrl\ is notably more stable while still performing well. 

\noindent\textbf{Image-Based Tasks and VLM Embedding Conditioning:}  
We next consider a set of image-based tasks with a large pre-trained VLA policy, $\pi_0$ \citep{black2024pi0visionlanguageactionflowmodel}, which uses VLM embeddings to condition a flow-based action expert. We evaluate on the LIBERO benchmark \citep{liu2023libero}. To enable \tmrl\, we fine-tune a context-smoothed $\pi_0$ model on the \texttt{Libero}-\{\texttt{Spatial}, \texttt{Object}, \texttt{Goal}, \texttt{10}\} datasets by adding noise to the VLM embeddings context $c$ before they are input to the action expert head (see \Cref{fig:cs-vla}). We then evaluate adaptation on two unseen tasks: a position-perturbed version of \texttt{Libero-Goal} task \citep{zhou2025liberopro} and a task from the \texttt{Libero-90} task suite. We provide additional environment details in \Cref{sec:appendix:sim_exps}. 

\begin{figure}[h]
    \centering
    \includegraphics[width=\linewidth]{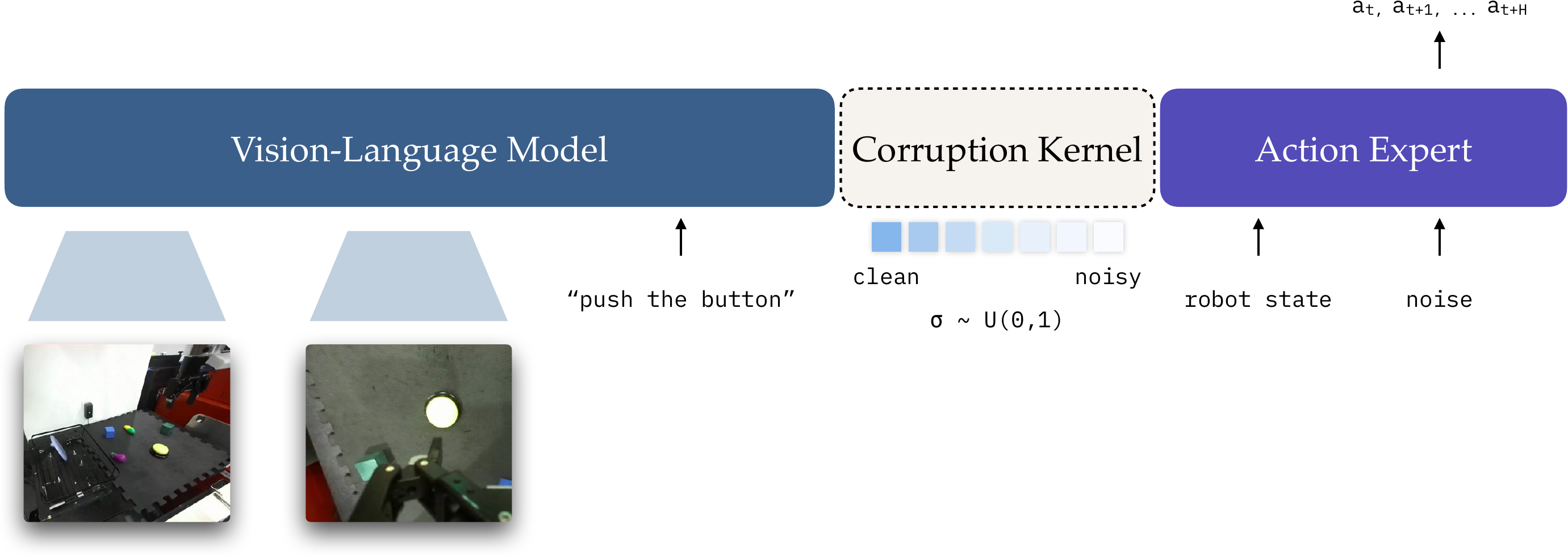}
    \caption{Context Smoothed Pre-training for VLAs: we propose applying the corruption kernel (\cref{eq:corruption_kernel_noise}) to VLM embeddings input to the action expert.}
    \label{fig:cs-vla}
\end{figure}

\Cref{fig:sim-results} shows that \tmrl\ leverages action chunks from different tasks in the dataset to solve the \texttt{libero-goal} task more efficiently than \dsrl. \rlpd\ also learns the task because it is very short-horizon. 
In \texttt{libero-90}, we observe that only \tmrl\ learns the task; \tmrl\ leverages action sequences from other tasks, meanwhile \dsrl\ overfits to picking up the same object each time, resulting in an overly narrow action distribution that is insufficient to solve the task. 

\noindent\textbf{Pointcloud Conditioning:} 
Next, we look at a simulated dexterous manipulation grasping setting with a LEAP hand~\citep{shaw2023leaphand} on a Franka in IsaacLab~\citep{mittal2025isaaclab}. 
We are interested in evaluating \tmrl's RL fine-tuning ability on a \emph{pointcloud}-input policy; we examine whether context-smoothing on pointclouds enables \tmrl\ to adapt to grasping new objects. We pre-train context-smoothed policies on three can-shaped objects and evaluate on a new marker-shaped object (as visualized in \Cref{fig:appendix:dex_objects}). 

\begin{wrapfigure}[10]{r}{0.49\linewidth}
    \centering
    \vspace{-0.6cm}
    \includegraphics[width=\linewidth]{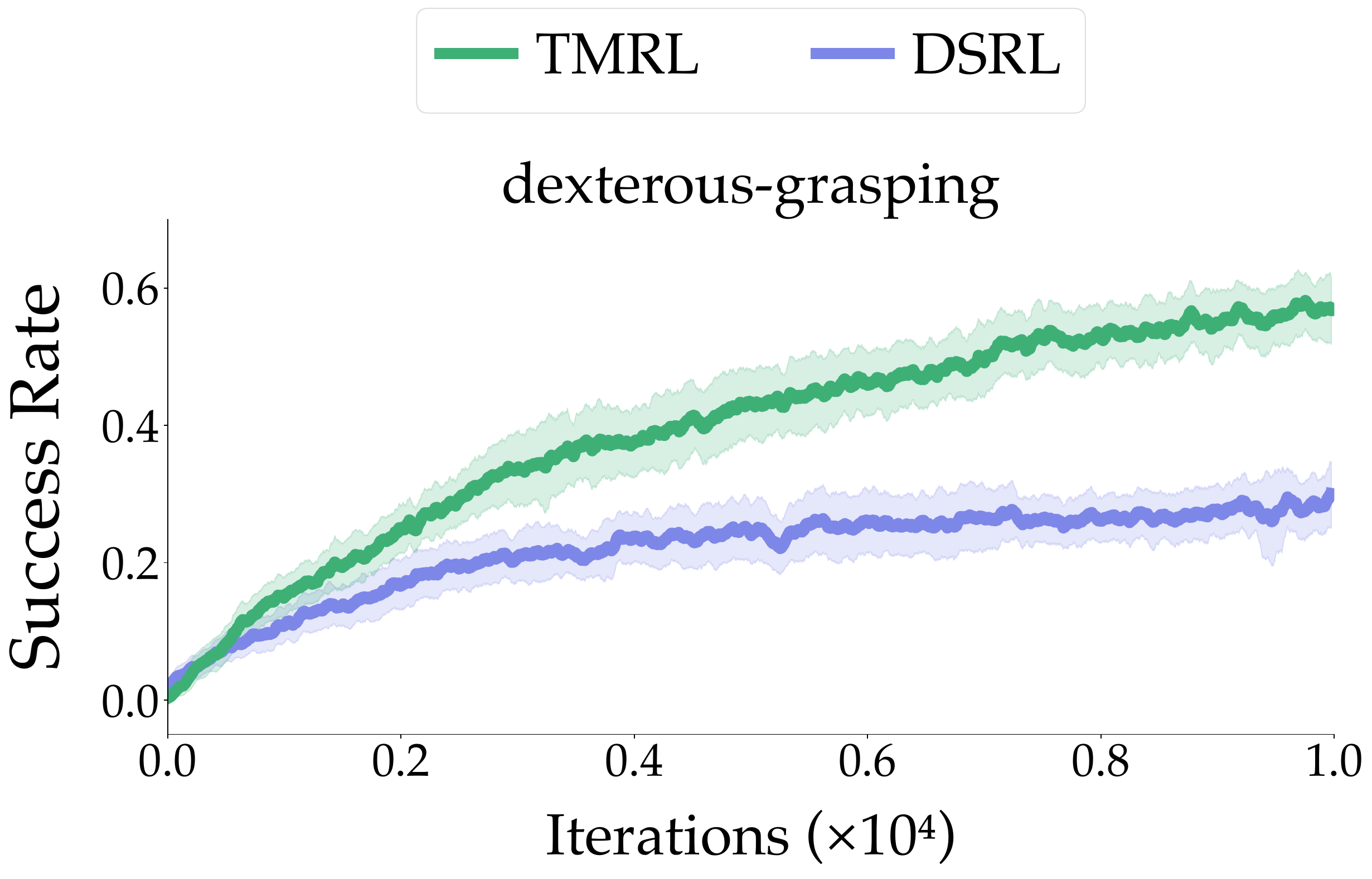}
    \caption{\tmrl\ enables efficient steering of 3D-input policies for dexterous grasping.}
    \label{fig:dex_curve}
\end{wrapfigure}
While grasp strategies do not immediately transfer, we find that noised pointclouds enable \tmrl\ to \emph{share grasping strategies} across different objects, enabling broader exploration and, consequently, faster learning and $2.5\times$ higher final success rates than \dsrl\ in \Cref{fig:dex_curve}.

\subsection{\ref{q3}: Does TMRL enable real-world RL on VLAs?}
\label{sec:experiments:real_exps}
Next, we demonstrate that TMRL enables effective adaptation of VLA policies in the real world. We evaluate on two platforms: a WidowX 250 6-DoF robot arm using the BridgeData-v2 \citep{pmlr-v229-walke23a} setup and dataset, and a Franka Panda 7-DoF robot arm using the DROID \citep{khazatsky2024droid} dataset. For each, we pre-train a context-smoothed policy by fine-tuning $\pi_0$~\cite{black2024pi0visionlanguageactionflowmodel} on the respective dataset using VLM embeddings as context $c$ and the context noising objective in \Cref{eq:train_generic}. Further implementation details are provided in \Cref{sec:appendix:real_exps}.

We evaluate \tmrl\ against \dsrl\ on three tasks, \texttt{sausage-in-pot}, \texttt{shrimp-in-white-drawer}, and \texttt{press-button}, with evaluation curves shown in \Cref{fig:real}. While the pre-trained policy is unable to solve the task successfully (often reaching for the wrong object), fine-tuning with \tmrl\ improves success rates to near-perfect levels. This is in stark contrast to \dsrl~\cite{wagenmaker2025steering}, which performs poorly because it cannot perform tasks beyond the coverage of the base policy. 
We omit a \postbc\ comparison here because the simulation performance in \Cref{fig:sim-results} is similar to that of \dsrl\ and it requires 2 additional steps: 1-step Gaussian ensemble policy pre-training and data labeling, then $\pi_0$ VLA re-training.

\begin{figure}[h]
    \centering
    \includegraphics[width=\linewidth]{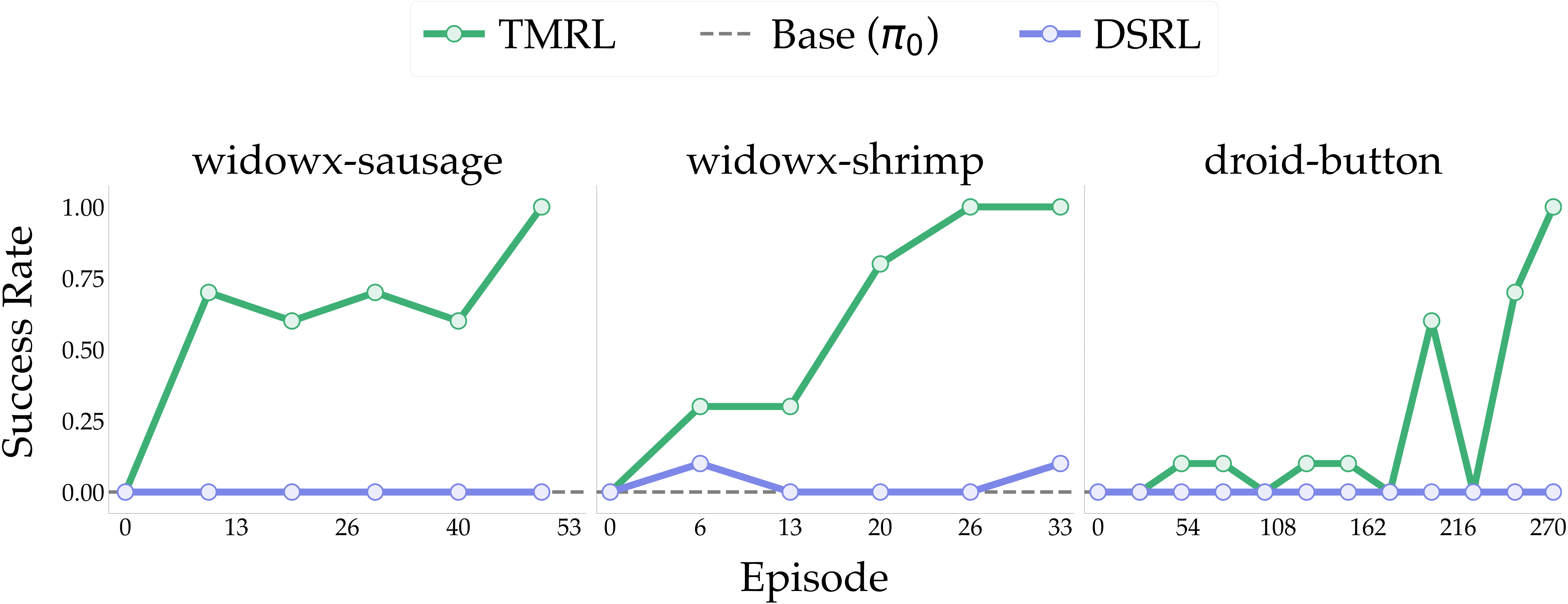}
    \caption{\tmrl\ enables steering of $\pi_0$~\cite{black2024pi0visionlanguageactionflowmodel} across three real-world tasks, while \dsrl~\citep{wagenmaker2025steering} fails to learn any task.}
    \label{fig:real}
    \vspace{-0.5cm}
\end{figure}

\subsection{\ref{q4} Analysis and Ablations.}
\label{sec:exp:ablations}
\textbf{CFG-RL Ablation:} To understand the impact of the context-smoothing as compared to other means of extrapolating beyond the base policy, we compare \tmrl\ with context-smoothed policies against those using classifier-free guidance (CFG)~\citep{ho2022classifierfreediffusionguidance}. 
CFG also interpolates between the policy marginal and conditional by tuning an interpolation coefficient $w$ on two GCPs: $\nabla_a \log p(a) + w \left(\nabla_a \log p(a \mid c) - \nabla_a \log p(a)\right)$. We directly compare this ablation (\tmrlcfg) to timestep modulation in \tmrl\, training the high-level steering policy to output the interpolation coefficient $w$ rather than the timestep. 
As seen in \Cref{fig:tmrl-cfg}, \tmrlcfg\ fails because the conditioning, which still relies on $p(a \mid c)$, struggles to extrapolate to OOD contexts no matter the $w$. In contrast, \tmrl\ brings OOD contexts back in-distribution via context corruption, providing coherent exploration in OOD settings as well.

\begin{figure}[!h]
    \centering
    \includegraphics[width=\linewidth]{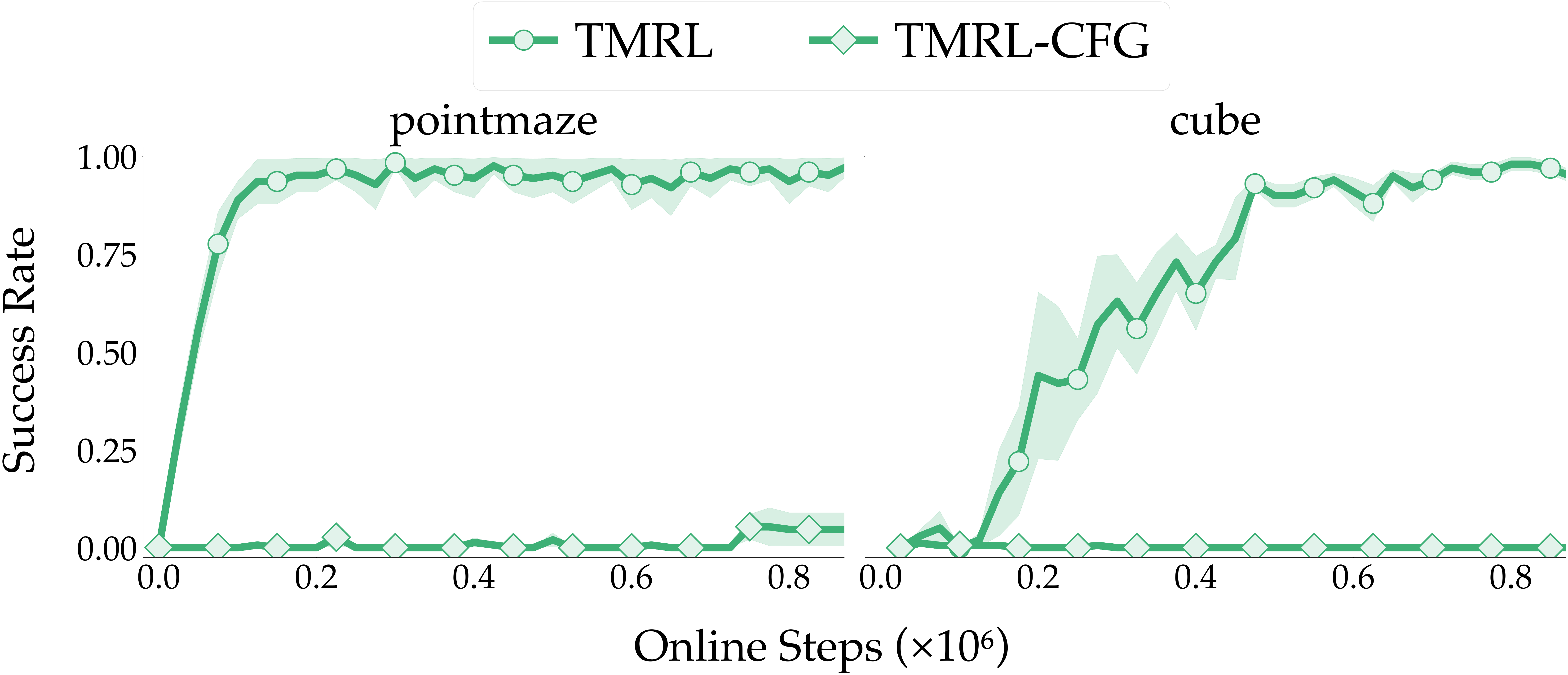}
    \caption{Comparing adaptation with \tmrl\ against \tmrlcfg\ for OGBench tasks. We find that our method substantially outperforms CFG-based interpolation.}
    \label{fig:tmrl-cfg}
    \vspace{-0.25cm}
\end{figure}

\noindent\textbf{Action Coverage Visualization:} We then visualize the exploration behavior induced by CSP+\tmrl vs. BC+\dsrl in Fig~\ref{fig:exploration}. The exploration distribution of \dsrl\ is relatively narrow, staying close to the coverage of the base policy. In contrast, the exploration behavior of \tmrl\ is considerably broader, showing a diversity of strategies beyond the narrow base policy distribution. \tmrl\ does not simply perform random actions, but rather aliases coherent actions from nearby states, for more informed exploration. 

\begin{figure}[!h]
    \centering
    \includegraphics[width=\linewidth]{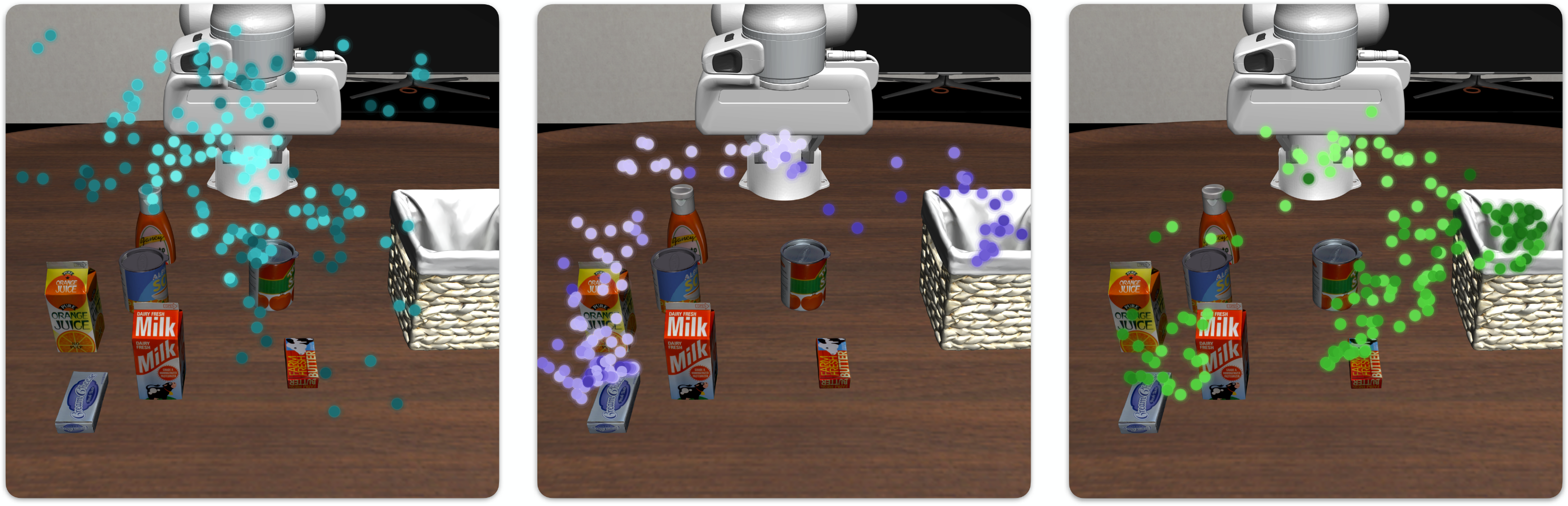}
    \caption{Comparison of exploration behaviors on the task, \texttt{"pick up the butter and put it in the basket"} for \rlpd\ (left), \dsrl\ (center), and \tmrl\ (right). While \rlpd\ explores via random action noise and \dsrl\ relies on action samples from the conditional distribution, \tmrl\ leverages action sequences from smoothed contexts, resulting in broadened, yet coherent exploration.}
    \label{fig:exploration}
\end{figure}

\noindent\textbf{Evolving Context-Smoothing Over Time:}
Finally, we demonstrate that \tmrl\ learns to dynamically adjust the amount of context-smoothing (via diffusion timestep $t_c$ in \Cref{fig:evolving_timesteps}).
At the beginning of the trajectory, \tmrl\ uses more diffusion noise so that the $\pi_0$ CSP policy can reach the sausage---by default $\pi_0$ is overfit to reaching the carrot almost every time.
Near the end of the trajectory, \tmrl\ uses less diffusion noise because once $\pi_0$ has picked up an object, it will generally attempt to correctly place it in the pot and therefore does not need as much diffusion noise to steer towards the correct behavior.
We plot more examples in Appendix \cref{fig:appendix:sausage_timesteps} and \cref{fig:appendix:shrimp_timesteps}.

\begin{figure}[htb]
    \centering
       \includegraphics[width=\linewidth]{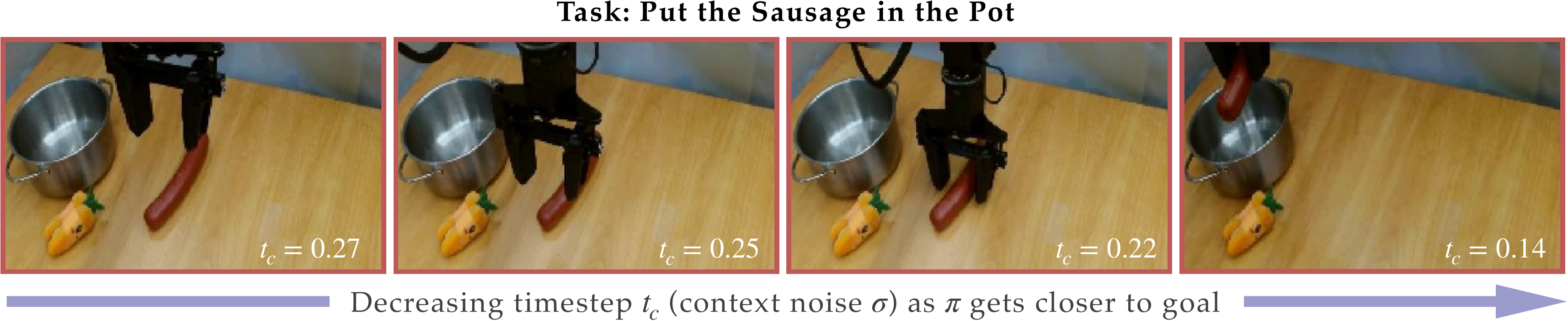}
    \caption{A rollout with a converged \tmrl\ policy on the WidowX where it learns to decrease the context-smoothing ($t_c$ normalized $[0, 1]$) throughout the trajectory.}
    \label{fig:evolving_timesteps}
\end{figure}

\section{Discussion and Limitations} 
While TMRL with context-smoothed policies can demonstrate significant performance benefits, there are many areas for improvement. Perhaps most important is safety: TMRL broadens the action distribution by aliasing actions across contexts, but this can lead to unsafe behavior. Mitigating these behaviors using a safety filter or a world model is necessary before TMRL can be used reliably. Secondly, while TMRL performs well in the real world, its sample efficiency remains impractical for many tasks of practical interest. We expect that further improvements to the steering algorithm and corruption kernel can increase TMRL's efficiency.

\section*{Acknowledgments}
We thank Chuning Zhu and Florian Shkurti for feedback on earlier drafts of this paper, and Dieter Fox for initial discussions during the project formulation.

This project was supported by funding from Amazon FAR through the Amazon Science Hub.
Additionally, we thank the UW Hyak and Tillicum high-performance computing clusters for providing us with compute resources.

\bibliographystyle{IEEEtranN}
\bibliography{refs}

\clearpage

\appendix
\section{Implementation Details}
\label{sec:appendix:implementation}

\begin{algorithm*}
\caption{Timestep-Modulated RL (TMRL) with Context-Smoothed Pretraining.}
\label{alg:tmrl_full}
\begin{algorithmic}[1]
\STATE \textbf{Pre-train context-smoothed policy:} 
Train $p_\sigma(a^{1:H} \mid c, z, \sigma)$ given context $c$, randomly sampled noise $z$ (e.g., $N(0, I)$ in diffusion policies), and smoothing parameter $\sigma$ ($=$ diffusion timestep $t_c$) on offline data $\mathcal{D}_{\text{off}}$ with \Cref{eq:train_generic}:

    \[
\min_\theta\ 
\mathbb{E}_{(c,a)\sim\mathcal{D}_\text{off}}
\ \mathbb{E}_{\substack{\sigma\sim\mathcal{S} \\ \tilde c\sim q_\sigma(\cdot\mid c)}}
\
\Big[
\ell\big(\theta;\ a, \tilde c, \sigma\big)
\Big]
\]

\STATE \textbf{Initialize:} replay buffer $\mathcal{B}$, smoothing-aware noise critic $Q(s, z, \sigma)$, high-level actor $\pi_{\mathrm{HL}}(z, \sigma \mid s)$, corruption kernel $q_\sigma$

\WHILE{interacting with the environment (or replaying from $\mathcal{B}$)}
    \STATE Update $Q$: \COMMENT{Update Smoothing-Aware Critic}
    \[
    \min_Q \mathbb{E}_{(s_t, z, r_{t:t+H}, s_{t+H}, \sigma_t) \sim \mathcal{B}, \; (\sigma', z') \sim \pi_\mathrm{HL}} \left[ Q(s, z, \sigma) - \left( \sum_{i=t}^{t+H}\gamma^ir_t +  Q(s_{t+H}, \sigma', z') \right)^2 \right]
    \]

    \STATE Update $\pi_{\mathrm{HL}}$: \COMMENT{Update High-Level (Timestep-Modulating) Actor with Off-Policy RL Objective~\citep{haarnoja2018sac}}
    \[
    \max_{z, \sigma \sim \pi_\mathrm{HL}} \mathbb{E}_{s} [ Q(s, z, \sigma) - \alpha \log \pi_\mathrm{HL}(\cdot \mid s)] 
    \]

    \IF{online environment is available}
        \STATE Sample $(z_t, \sigma_t) \sim \pi_{\mathrm{HL}}(s_t)$
        \STATE Corrupt context $\tilde{c}_t \leftarrow q_\sigma (\cdot \mid c_t)$ (\cref{eq:corruption_kernel_noise})
        \STATE Sample and execute action chunk $a^{1:H} \sim p_\theta(\cdot \mid \tilde{c}_t, z_t, \sigma_t)$
        \STATE Observe $r_{t:t+H}, s_{t+H}$, and add $(s_t, z_t, r_{t:t+H}, s_{t+H}, \sigma_t)$ to $\mathcal{B}$
    \ENDIF
\ENDWHILE

\end{algorithmic}
\end{algorithm*}

\subsection{Extended Proof of Thm 1}
\label{sec:appendix:proof}

\textbf{Theorem 1.}
We restate \Cref{thm:tv_lipschitz_overlap} for clarity:
For any two contexts $c,c'\in\mathbb{R}^d$ (\Cref{eq:tv_lipschitz}),
\begin{equation*}
\mathrm{TV}\!\big(p_\sigma(\cdot\mid c),p_\sigma(\cdot\mid c')\big)
~\le~
\frac{\mathbb{E}\|w\|}{\sigma}\,\|c-c'\|.
\end{equation*}
Equivalently (\Cref{eq:ov_lipschitz}),
\begin{equation*}
\mathrm{Ov}\!\big(p_\sigma(\cdot\mid c),p_\sigma(\cdot\mid c')\big)
~\ge~
1-\frac{\mathbb{E}\|w\|}{\sigma}\,\|c-c'\|.
\end{equation*}
In particular, for fixed $c,c'$ the lower bound in \Cref{eq:ov_lipschitz} is nondecreasing in $\sigma$,
and for fixed $\sigma$ it is nonincreasing in $\|c-c'\|$.

Our proof builds on the Gaussian-smoothing analysis originally developed for randomized smoothing~\citep{cohen2019certified}. \citet{block2023provable} prove an analogous smoothness guarantee for observation-noised policies, with the same $1/\sigma$ scaling, via a different argument.

\begin{proof}
For any measurable action event $A \subseteq \mathcal{A}$, define
\begin{equation}
    f_A(c) := p(A \mid c) \in [0, 1].
\end{equation}

By definition of a context-smoothed conditional action distribution $p_\sigma$ in \Cref{eq:gaussian_smoothing_def},

\begin{equation}
    p_\sigma(A \mid c) = \mathbb{E} [f_A(c+\sigma w)], \; \; w \sim \mathcal{N}(0, I_d).
\end{equation}

Let $g_A(c) := p_\sigma(A \mid c)$. Then using Stein's identity (Gaussian smoothing gradient identity),
\begin{equation}
    \nabla g_A(c)  = \frac{1}{\sigma} \mathbb{E}[f_A(c + \sigma w)w].
\end{equation}

Thus, 
\begin{equation}
\|\nabla g_A(c) \| \le \frac{1}{\sigma} \mathbb{E}[|f_A(c + \sigma w)| \, \|w\|] \le \frac{\mathbb{E}\|w\|}{\sigma}.
\label{eq:appdx:lipschitz}
\end{equation}
The first inequality follows by Jensen's inequality and the convexity of $|| \cdot ||$, and the second from $0 \le f_A \le 1.$ 
Hence \Cref{eq:appdx:lipschitz} implies that $g_A$ is ($\mathbb{E}\|w\|/\sigma$)-Lipschitz continuous. 
Thus, for two contexts $c, c'$ and their difference $c - c'$:

\begin{equation}
    |p_\sigma(A \mid c) - p_\sigma (A \mid c')| = |g_A(c) - g_A(c')| \le \frac{\mathbb{E} \|w\|}{\sigma} \| c - c'\|.
\end{equation}

Taking $\sup_A$ gives \Cref{eq:tv_lipschitz}. Then \Cref{eq:ov_lipschitz} follows simply from the overlap identity defined in \Cref{sec:method}: $\mathrm{Ov}(P,Q)\coloneqq 1-\mathrm{TV}(P,Q)$.
\end{proof}

\subsection{Extended Proof of \Cref{cor:overlap_threshold}}
\label{sec:appendix:proof_corollary}
At a high level, \Cref{cor:overlap_threshold} states that for two contexts $c, c'$, smoothing increases overlap even with respect to the conditional distribution of the original policy $p(\cdot \mid c), p(\cdot \mid c')$, extending the result from \Cref{thm:tv_lipschitz_overlap} that smoothing increases overlap with respect to the conditional distribution of the context-smoothed policy $p_\sigma$.

\begin{proof}
    Assume the identifiability lower bound from \Cref{eq:identifiability}: $\mathrm{TV}\big(p(\cdot \mid c), p(\cdot \mid c')\big) \ge m\|\Delta\|$ for some $m> 0$, $\Delta = c - c'$
    Assuming $\sigma \ge \mathbb{E}\|w\|/m$ as stated in the assumption \Cref{eq:sigma_threshold}. Then, following \Cref{eq:tv_lipschitz} from \Cref{thm:tv_lipschitz_overlap}, we have:
    \begin{align*}
        \mathrm{TV}\big( p_\sigma(\cdot \mid c), p_\sigma(\cdot \mid c')\big) &\le \frac{\mathbb{E}\|w\|}{\sigma} \|\Delta\| \\
        & \le m \|\Delta\| \\
        &\le \mathrm{TV}\big( p(\cdot \mid c), p(\cdot \mid c')\big).
    \end{align*}

    Converting $\mathrm{Ov} = 1 - \mathrm{TV}$ yields \Cref{eq:overlap_increase}.
\end{proof}

\subsection{\method\ Implementation details}
\label{sec:appendix:impl_details}
\textbf{Training a context-smoothed imitation learning policy.} Our approach is agnostic to the choice of policy class; however, in this work we focus on generative flow and diffusion policies, motivated by their recent success in behavior cloning for robotic learning.
For training our own context-smoothed policies in OGBench (not widowx or dexterous tasks), our noise prediction network uses a Diffusion Transformer (DiT) backbone adapted from \citet{zhu2025uwm}. While typical GCPs model $p(a\mid c)$, training a context-smoothed policy differs only in sampling a context timestep and corrupting the context $\tilde{c}$ (\Cref{eq:corruption_kernel_noise}). The actions $a_t$ are embedded and fed as a sequence into the transformer. The noisy conditionings, action timestep, the conditioning timestep and other observation conditionings that are not being noised (\eg robot proprioception) are used to condition the transformer via Adaptive Layer Normalization (AdaLN)~\citep{peebles2023scalable}. We train the model with \Cref{eq:train_generic}

\begin{table}[t]
    \centering
    \caption{Flow policy training hyperparameters}
        \begin{tabular}{ll}
        \hline
        \textbf{Hyperparameter} & \textbf{Value} \\
        \hline\\[-5pt]
        \textbf{Model} \\[2pt]
        Hidden size             & 128 \\
        Embedding dim           & 256 \\
        Activation              & Mish \\
        Number of layers        & 4 \\
        Number of heads         & 4 \\[2pt]
        \textbf{Flow} \\[2pt]
        Training flow steps     & 100 \\
        Inference flow steps    & 10 \\
        Timestep sampling       & Uniform \\[2pt]
        \textbf{Training} \\[2pt]
        Batch Size              & 256 \\
        Optimizer               & AdamW \\
        Learning rate           & $3e^{-4}$ \\
        Weight decay            & $1e^{-8}$ \\
        Betas                   & $[0.95, 0.999]$ \\
        Epsilon                 & $1e^{-8}$ \\
        LR schedule             & constant \\
        \hline
        \end{tabular}
    
    \label{tab:hp_train}
\end{table}

Our corruption kernel $q$, uses a discrete linear scheduler with \texttt{beta\_start}$=1e^{-4}$, \texttt{beta\_end}$=0.02$ with $T=1000$ diffusion timesteps. During training, we sample a continuous $t\sim Unif(0, 1)$ which we note is sampled independently from our action noise timestep $t_a$ (for clarity, we denote $t_c=t$ for the context timestep). We then sample noise, $\epsilon\sim \mathcal{N}(0,I)$, convert the timestep to a discrete index, and then sample the corrupted context $c_t=\sqrt{\bar{\alpha}_t}c+\sqrt{1-\bar{\alpha_t}}\epsilon$.
The policy is trained to model the action distribution $p(a\mid c_t, t)$ at different levels of noise $t$ conditioned on the noisy context $c_t$. In our experiments, we show that context-smoothed policy training can be done from scratch or through fine-tuning on a base VLA model.

\textbf{Timestep-Modulated Reinforcement Learning.}
Our experiments use SAC~\citep{haarnoja2018sac} or RLPD~\citep{pmlr-v202-ball23a} as the online RL algorithm. Building on top of DSRL, we run RL over a high-level actor whose outputs are fed into the low-level GCP. Our high level actor $\pi_{\text{HL}}$ outputs both the initial noise vector $z$ as well as the noise timestep $t_c$. We model the actor as two independent distributions, $\pi_{\text{HL}}^z$ and $\pi_\text{HL}^{t_c}$, which allows us to use different noise bounds as well as target entropies for both distributions. Following DSRL, we also train separate timestep-aware noise critics with $t_c$ as an additional input $Q(s,a,z,t_c)$. In our experiments, we experiment with and without the timestep-aware noise critic and find no major performance differences. Our \texttt{OGBench} experiments utilize both critics while the rest of our experiments, use noise critics $Q(s,a,z,t_c)$.
\begin{table}[ht]
    \centering
    \caption{Common \method\ hyperparameters}
        \begin{tabular}{ll}
        \hline
        \textbf{Hyperparameter} & \textbf{Value} \\
        \hline\\[-5pt]
        Critic LR               & $3e^{-4}$ \\
        Actor LR                & $3e^{-4}$ \\
        Autotune                & True \\
        Tau                     & 0.005 \\
        Critics                 & 5 \\
        Actor and critic layers & 3 \\
        Buffer warmup steps     & 500 \\
        \hline
        \end{tabular}
    
    \label{tab:hp_rl}
\end{table}

\subsection{Baseline Implementation Details}
\label{sec:appendix:baseline_details}
\textbf{RLPD.} Following \citet{pmlr-v202-ball23a}, which we also apply to all other methods: each batch consists of 50\% offline data and 50\% samples from the online replay buffer; critics use layer normalization, we subsample two critics, and remove entropy backups.


\textbf{SPiRL.} While SPiRL~\citep{pertsch2020spirl} initially proposes to utilize an LSTM~\citep{10.1162/neco.1997.9.8.1735} as the architecture choice for the skill encoder and decoder, we adopt a more modern transformer architecture. After pre-training the skill prior, we initialize the learnable actor with the skill prior.
Following \citet{pertsch2020spirl}, we minimize the reverse KL $D_{KL}(q || p)$ between the skill prior and the actor. Using the reverse KL rather than the forward KL ensures that the learned prior is mode-covering.

\textbf{PostBC.} Following \citet{wagenmaker2025posteriorbehavioralcloningpretraining}, we first train an ensemble of MLP predictors on bootstrapped resamples of the demonstration dataset to approximate the posterior covariance of the demonstrator's actions. Each ensemble member is trained via regression on a bootstrap resample (sampled with replacement) of the dataset. The per-state posterior covariance is estimated as the empirical variance across ensemble predictions. During BC pretraining, we perturb each action target in each training batch by noise sampled from a zero-mean Gaussian with this estimated posterior covariance, scaled by a weight $\alpha$. The final policy is parameterized as a diffusion model trained on these noise-perturbed action targets. At inference time, the policy is used without modification.

\begin{figure*}[t]
    \centering
    \includegraphics[width=0.9\textwidth]{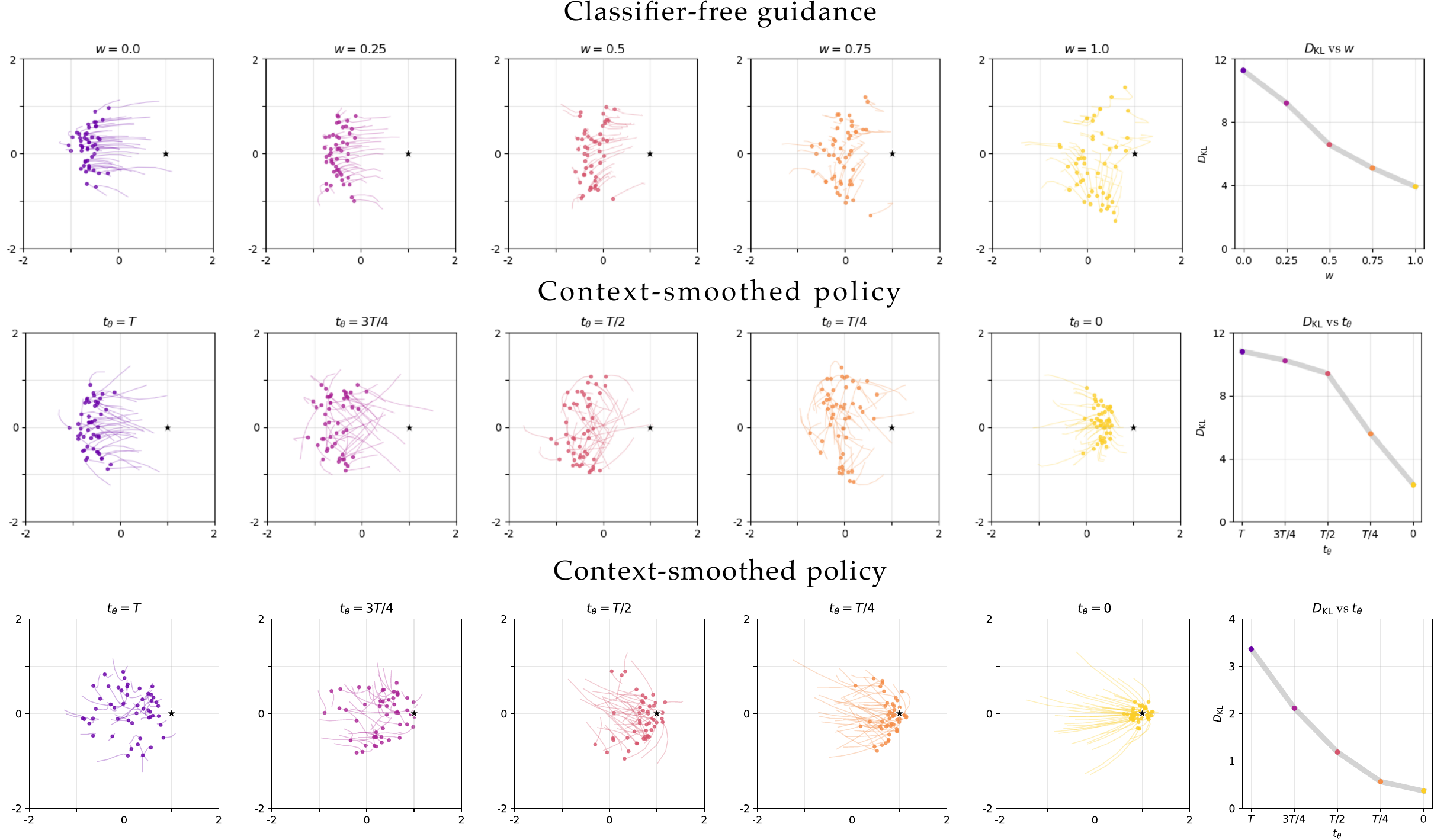}
    \caption{\textbf{Conditioned smoothed policies enable out-of-support generalization.} (Top) In a 2D unit-circle setup (\Cref{fig:borrowing-exp}), we restrict training to angles $\theta \in [\pi/2, 3\pi/2]$ and evaluate on an unseen angle $\theta = 0$. Unlike standard conditional diffusion, context-smoothed training (\Cref{eq:train_generic}) successfully approximates the target Gaussian at $(\cos 0, \sin 0)$. 
    \textbf{Noisy Conditioning Enables Marginal-to-Conditional Interpolation.} (Bottom) Angles on the unit circle define the conditioning variable $\theta$, with corresponding 2D points $(x, y) = (\cos\theta, \sin\theta)$. The marginal distribution $p(x,y)$ consists of noisy points around the circle, while the conditional distribution $p(x,y \mid \theta)$ is a Gaussian at $N([\cos\theta, \sin\theta], I)$. By varying the conditioning noise level $t_\theta$ for $\theta = 0$, we visualize denoised samples: larger $t_\theta$ yields weaker conditioning (approaching the marginal $p(x,y)$), and smaller $t_\theta$ yields stronger conditioning (approaching $p(x,y \mid \theta)$). The rightmost plots report the KL divergence between the induced distribution at each $t_\theta$ and the target conditional $p(x,y \mid \theta=0)$, highlighting our method's improved OOD generalization.}
    \label{fig:toy-exp}
\end{figure*}

\subsection{TMRL vs. CFG-RL}
At their core, \method\ and CFG~\citep{ho2022classifierfreediffusionguidance} represent independent ways to sample from distributions that control the reliance on a context $c$. CFG jointly trains a conditional $p(a\mid c)$ and unconditional model $p(a)$ (typically sharing parameters), and at inference time interpolates between them using classifier-free guidance. Concretely, the guided score is formed as 
$\nabla_a \log{\pi(a | s)} + w(\nabla_a \log{\pi(a | s, o)} - \nabla_a\log{\pi(a | s)})$. In \Cref{fig:toy-exp}, we visualize a toy example comparing sampled points of an unseen conditioning $\theta$. We observe that CFG approximates the true distribution worse than a context-smoothed policy (here, our noisy-conditioning approach). We attribute this to CFG not being trained on this out of distribution conditioning. CFG samples have higher variance and are more spread out at $w=1.0$. Whilst, during context-smoothed policy training, the novel $\theta$ has likely been seen during training, thus the policy generates samples that more closely approximate the true distribution. 

\subsection{Details of Simulation Experiments}
\label{sec:appendix:sim_exps}
\textbf{OGBench.} For our first set of simulation experiments in OGBench~\citep{ogbench_park2025}, we utilize a flow-matching policy~\citep{lipman2022flow}. We use RLPD~\citep{pmlr-v202-ball23a} as our RL algorithm, which can leverage prior data alongside online interaction data. The same offline data used to pre-train the diffusion policy is used during online learning where we employ 50/50 sampling (i.e., 50\% of the data comes from the offline data and 50\% of the data comes from the online replay buffer). Additionally, for all methods and experiments, we utilize 5 critics, layer norms, and remove the entropy term from the critic loss. For both evaluation environments, the agent is required to learn multiple tasks simultaneously: four for PointMaze and three for Cube. Both PointMaze and Cube employ sparse rewards: the agent receives a reward of 0 upon successful episode termination and –1 otherwise. 

\begin{figure}[h]
    \centering
    \includegraphics[width=\linewidth]{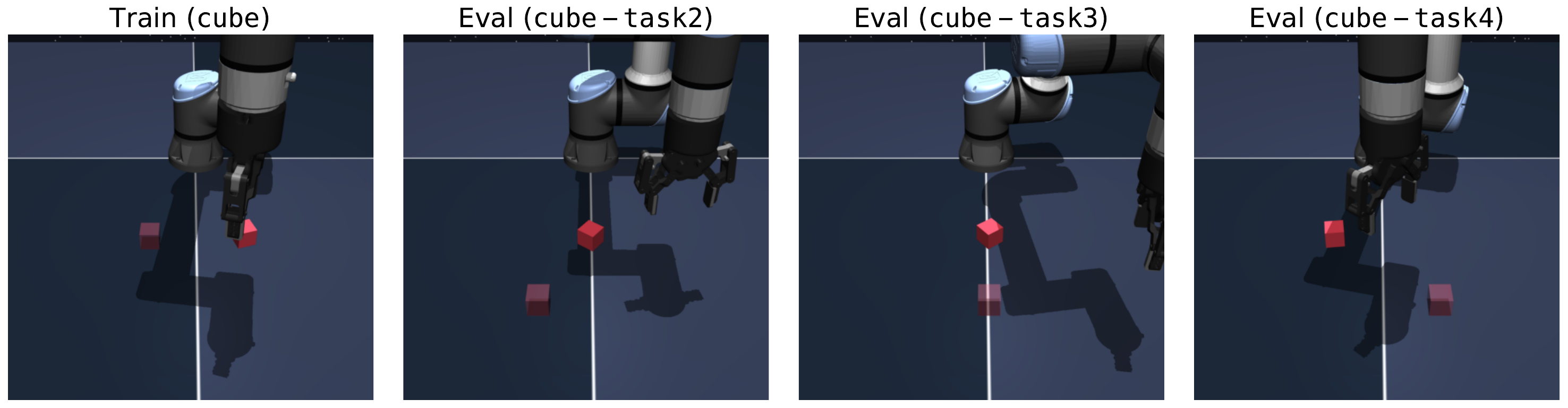}
    \caption{\textbf{Cube environments.} We filter transitions from the \texttt{cube-single-play} dataset where the cube or the cube goal x-position is $> 0.4$. We evaluate on three tasks where the goal location is located beyond the filtered region.}
    \label{fig:cube_env}
\end{figure}

\begin{figure*}[t]
    \centering
    \includegraphics[width=\linewidth]{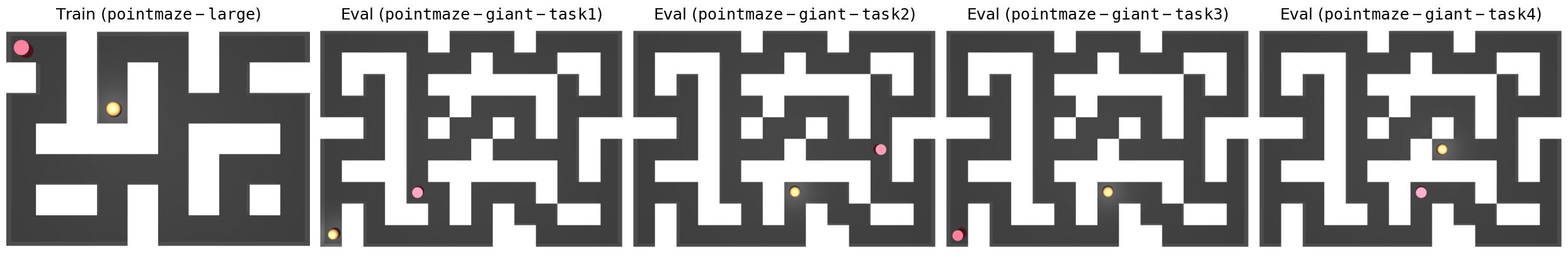}
    \caption{\textbf{PointMaze environments.} Example train and evaluation mazes. The evaluation environment features a significantly larger maze layout. We train RL policies on four out-of-distribution goal locations simultaneously.}
    \label{fig:pointmaze_env}
\end{figure*}

\begin{table}[ht]
    \centering
    \caption{OGBench hyperparameters}
        \begin{tabular}{ll}
        \hline
        \textbf{Hyperparameter} & \textbf{Value} \\
        \hline\\[-5pt]
        Action len              & $50$ (\texttt{pointmaze}), $50$ (\texttt{cube}) \\
        Discount                & $0.995$ \\
        Noise bound             & $1.0$ \\
        Hidden dim              & $512$ \\
        Action target entropy   & $-\texttt{action\_dim}$ \\
        Timestep target entropy & $-1$ \\
        Timestep bound          & $1.0$ \\
        Num Envs                & $4$ \\
        \hline
        \end{tabular}

    \label{tab:hp_ogbench}
\end{table}

We use the \texttt{pointmaze-large-navigate-v0} dataset for training, where each observation consists of the agent’s 2D position concatenated with the 2D goal. This combined vector serves as the conditioning input that is subjected to noise. Evaluation is performed on the larger \texttt{pointmaze-giant-navigate} environment, as shown in \Cref{fig:pointmaze_env}. We use the \texttt{cube-single-play} dataset, filtering out transitions where the cube’s position exceeds a predefined threshold, as visualized in \Cref{fig:cube_env}. The perturbed conditioning variable is the cube’s three-dimensional goal position. Evaluation is conducted on three tasks with goal locations situated in the out-of-distribution region.


\textbf{Libero.} For our Libero~\citep{liu2023libero} experiments, we utilize the $\pi_0$-Libero checkpoint provided by \citet{black2024pi0visionlanguageactionflowmodel} at \texttt{s3://openpi-assets/checkpoints/pi0\_libero} which trains on four Libero datasets: \texttt{Libero-Spatial}, \texttt{Libero-Object}, \texttt{Libero-Goal} and \texttt{Libero-10}. For TMRL, we also fine-tune $\pi_0$-base using context-smoothed policy training on the same datasets. The other baselines also use the same dataset. We evaluate on two novel tasks: task 51 from the \texttt{Libero-90} task suite and the swap perturbation for $\texttt{Libero-Goal}$ from \citet{zhou2025liberopro}. 

\begin{figure}[h]
    \centering
    \includegraphics[width=\linewidth]{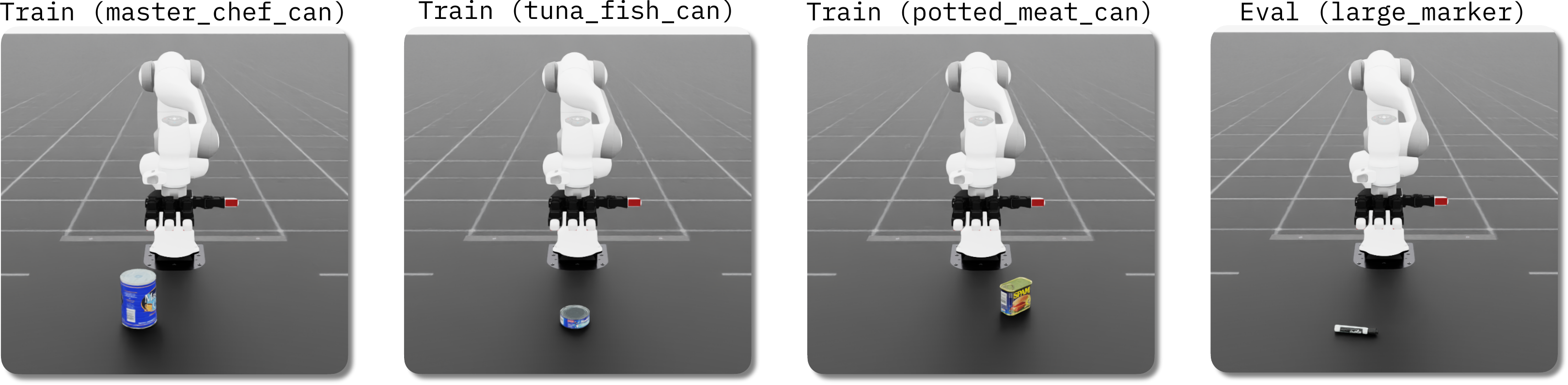}
    \caption{\textbf{Dexterous manipulation grasping objects.} A policy is trained on three objects \texttt{master\_chef\_can} , \texttt{tuna\_fish\_can} and \texttt{potted\_meat\_can} and then evaluated on a novel \texttt{large\_marker} object.}
    \label{fig:appendix:dex_objects}
\end{figure}

\textbf{Dexterous grasping.} Our dexterous grasping experiments are run in simulation built on IsaacLab~\citep{mittal2025isaaclab}, we distill an RL expert into point cloud conditioned visuomotor policies via a standard student-teacher approach \citep{pmlr-v164-chen22a} and train on three can-shaped objects---\texttt{master\_chef\_can}, \texttt{tuna\_fish\_can}, \texttt{potted\_meat\_can}---collecting 200 demonstrations for each and evaluate adaptation to the unseen \texttt{large\_marker} object shown in \Cref{fig:appendix:dex_objects}. We first train a diffusion policy~\citep{chi2023diffusionpolicy} using point clouds as the noised context, we also condition on the 22-dimensional joint pose consisting of 6-dimensional arm pose and the 16-dimensional finger joint pose.  We use a relative joint pose action space. After training a context-smoothed policy, we use PPO~\citep{schulman2017proximalpolicyoptimizationalgorithms} as our algorithm and run TMRL over the frozen base policy.

\begin{table}[h]
    \centering
    \caption{Dexterous Grasping PPO hyperparameters}
        \begin{tabular}{ll}
        \hline
        \textbf{Hyperparameter} & \textbf{Value} \\
        \hline\\[-5pt]
        Number of envs & $128$ \\
        Batch Size              & $4096$ \\
        Discount               & $0.99$ \\
        GAE lambda & $0.95$ \\
        LR & $3e^{-4}$ \\
        Clip range & $0.2$ \\
        Value loss coefficient & $1e^{-4}$ \\
        Max gradient norm & $1.0$ \\
        Optimizer & Adam \\
        \hline
        \end{tabular}
    \label{tab:hp_ppo}
\end{table}

\subsection{Details of Real Experiments}
\label{sec:appendix:real_exps}
\textbf{WidowX.} For our real-world WidowX experiments, we utilize the $\pi_0$ VLA model provided by \citet{black2024pi0visionlanguageactionflowmodel} at \texttt{s3://openpi-assets/checkpoints/pi0\_base}. We fine-tune the model on the Bridge V2~\citep{pmlr-v229-walke23a} dataset which is used for DSRL and fine-tune a separate context-smoothed policy for TMRL. We perform full fine-tuning of $\pi_0$-Base with 2 Nvidia H200 GPUs, training for 20 hours. We list further training details in \Cref{tab:hp_pi0}.

We evaluate our fine-tuned policies on two unseen tasks: \texttt{widowx-sausage} and \texttt{widowx-shrimp}. Both use maximum episode lengths of 100 steps and use sparse rewards (-1/0) rewarding the agent only when the task is completed. The \texttt{widowx-sausage} task consists of a sausage, a pot, and a distractor carrot object. The objective is to pick and place the sausage inside of the pot. We randomize all object positions within a 10 cm radius. The \texttt{widowx-shrimp} task consists of three drawers (white, red, green), a distractor sushi object on a green plate, and a shrimp object on a white plate. The objective is to pick and place the shrimp inside of the white drawer. We randomize the sushi and shrimp object positions within a 10 cm radius as well as randomizing orientation of both objects. 

\textbf{DROID.} For our real-world DROID experiments, we utilize the $\pi_0$ VLA model provided by \citet{black2024pi0visionlanguageactionflowmodel} at \texttt{s3://openpi-assets/checkpoints/pi0\_base}. We fine-tune a context-smoothed policy on the DROID~\citep{khazatsky2024droid} dataset and utilize the $\pi_0-\text{DROID}$ VLA model provided by \citet{khazatsky2024droid} at \texttt{s3://openpi-assets/checkpoints/pi0\_droid} which is used for DSRL and fine-tune a separate context-smoothed policy for TMRL. We perform full fine-tuning of $\pi_0$-Base with 4 Nvidia H200 GPUs for 100k steps, which takes roughly 3 days. We list further training details in \Cref{tab:hp_pi0}.

\begin{figure*}[h]
    \centering
    \includegraphics[width=\linewidth]{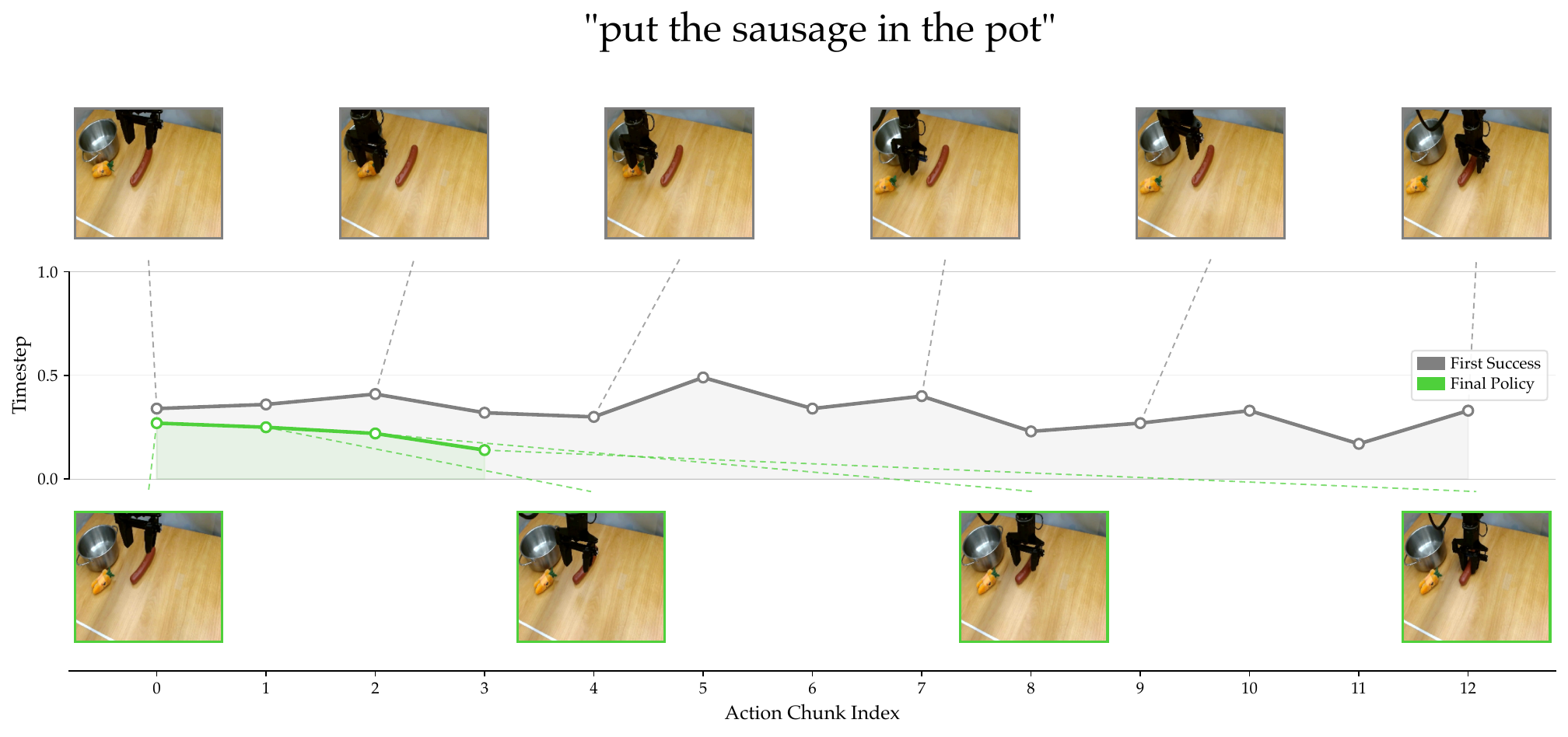}
    \caption{\textbf{Timestep modulation over the course of a rollout.} We visualize the timesteps used by TMRL across action chunk indices for the task ``put the sausage in the pot,'' comparing the first successful rollout to the converged final policy. The final policy uses lower timesteps toward the end of the trajectory, reflecting reliance on increasingly precise, imitation-like action distributions as the task nears completion.}
    \label{fig:appendix:sausage_timesteps}
\end{figure*}

\begin{figure*}[h]
    \centering
    \includegraphics[width=\linewidth]{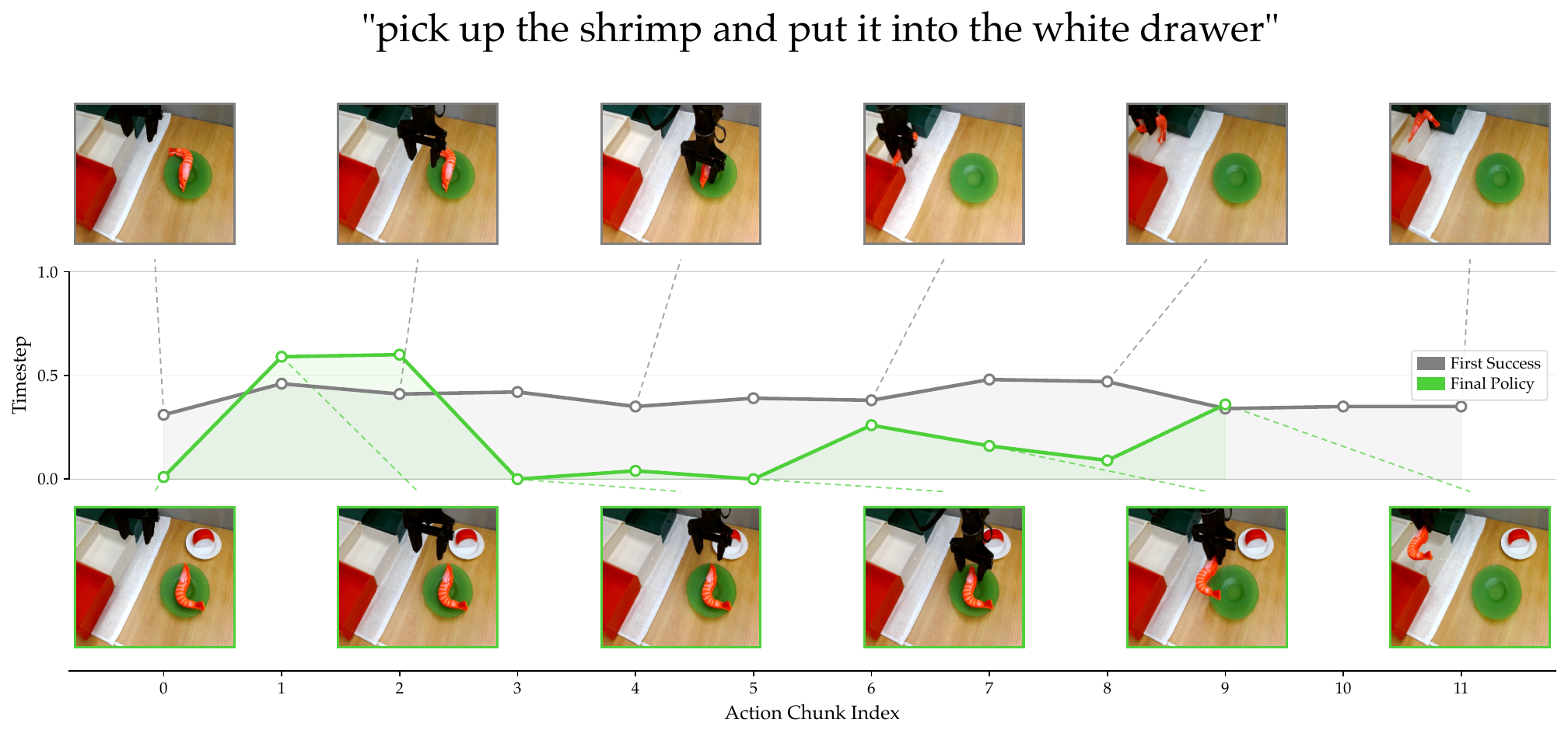}
    \caption{\textbf{Timestep modulation over the course of a rollout.} We visualize the timesteps used by TMRL across action chunk indices for the task ``pick up the shrimp and put it into the white drawer,'' comparing the first successful rollout to the converged final policy. The final policy uses notably lower timesteps during the manipulation-critical middle portion of the trajectory, reflecting more precise, imitation-like action distributions during the pick-and-place motion.}
    \label{fig:appendix:shrimp_timesteps}
\end{figure*}

We follow standard practices \citep{wagenmaker2025steering, pmlr-v202-ball23a}, such as high UTDs, layer normalizations in the critic networks, deep critic networks, and the use of multiple critics, we list additional hyperparameters in \Cref{tab:pi_ft}. For our real-world tasks, we generally use lower action noise bounds for TMRL than DSRL as we observe that using higher action bounds for TMRL tends to generate out-of-distribution and unsafe action sequences. 
A possible explanation for this is that noisy contexts are more likely to be out-of-distribution for the action expert. In addition, naively sampling action distributions close to the marginal $p(a)$ results in unsafe action sequences. To mitigate this issue, we simply upper bound the maximum timestep that can be sampled by the high-level policy $\pi_{\text{HL}}$ and observe that this mitigates such undesirable behaviors. 

\begin{table}[h]
    \centering
    \caption{Finetuning $\pi_0$ Libero and Bridge  hyperparameters}
        \begin{tabular}{ll}
        \hline
        \textbf{Hyperparameter} & \textbf{Value} \\
        \hline\\[-5pt]
        \textbf{Training} \\[2pt]
        Batch Size              & $32$ \\
        Optimizer               & cosine \\
        EMA Decay               & $0.99$ \\
        Decay LR                & $2.5e^{-6}$ \\
        Peak LR                 & $5e^{-4}$ \\
        Warmup steps            & $1e^3$ \\
        Train steps             & $3e^4$ \\
        \textbf{Flow} \\[2pt]
        Inference flow steps    & $10$ \\
        \hline
        \end{tabular}
    \label{tab:pi_ft}
\end{table}

DSRL~\citep{wagenmaker2025steering} originally uses VLM embeddings from the PaliGemma VLM model as inputs to both the actor and critic. However in our experiments, we find that replacing these with image embeddings encoded by DINOv2~\citep{oquab2024dinov2learningrobustvisual} performs comparably with far less computational overhead. We hypothesize that the advantage of VLM embeddings becomes more pronounced in multi-task settings with diverse language prompts, where language-conditioned representations can better capture task-specific structure.

\begin{table}[h]
    \centering
    \caption{Hyperparameters for TMRL $\pi_0$ experiments}
    \begin{tabular}{lccc}
        \hline
        \textbf{Hyperparameter} & \textbf{Libero} & \textbf{WidowX} & \textbf{DROID} \\
        \hline\\[-5pt]
        Discount                & $0.99$  & $0.99$ & $0.99$ \\
        Action execution len    & $5$     & $5$    & $5$    \\
        Noise bound             & $1.0$   & $0.25$ & $1.0$  \\
        Prefix noise bound      & $1.0$   & $1.0$  & $1.0$  \\
        Hidden dim              & $1024$  & $1024$ & $512$  \\
        Action target entropy   & $-16$   & $-16$  & $-16$  \\
        Timestep target entropy & $-0.5$  & $-0.5$ & $-0.5$ \\
        Timestep bound          & $0.6$   & $0.6$  & $0.1$  \\
        UTD                     & $1$     & $20$   & $20$   \\
        Num Envs                & $4$     & $1$    & $1$    \\
        \hline
    \end{tabular}
    \label{tab:hp_pi0}
\end{table}

\begin{table}[h]
    \centering
    \caption{Hyperparameters for DSRL $\pi_0$ experiments}
    \begin{tabular}{lccc}
        \hline
        \textbf{Hyperparameter} & \textbf{Libero} & \textbf{WidowX} & \textbf{DROID} \\
        \hline\\[-5pt]
        Discount                & $0.99$  & $0.99$ & $0.99$ \\
        Action execution len    & $5$     & $5$    & $5$    \\
        Noise bound             & $1.0$   & $1.0$ & $1.0$  \\
        Hidden dim              & $1024$  & $1024$ & $512$  \\
        Action target entropy   & $-16$   & $-16$  & $-16$  \\
        UTD                     & $1$     & $20$   & $20$   \\
        Num Envs                & $4$     & $1$    & $1$    \\
        \hline
    \end{tabular}
    \label{tab:hp_pi0_dsrl}
\end{table}

\begin{figure*}[ht]
    \centering
    \includegraphics[width=0.65\linewidth]{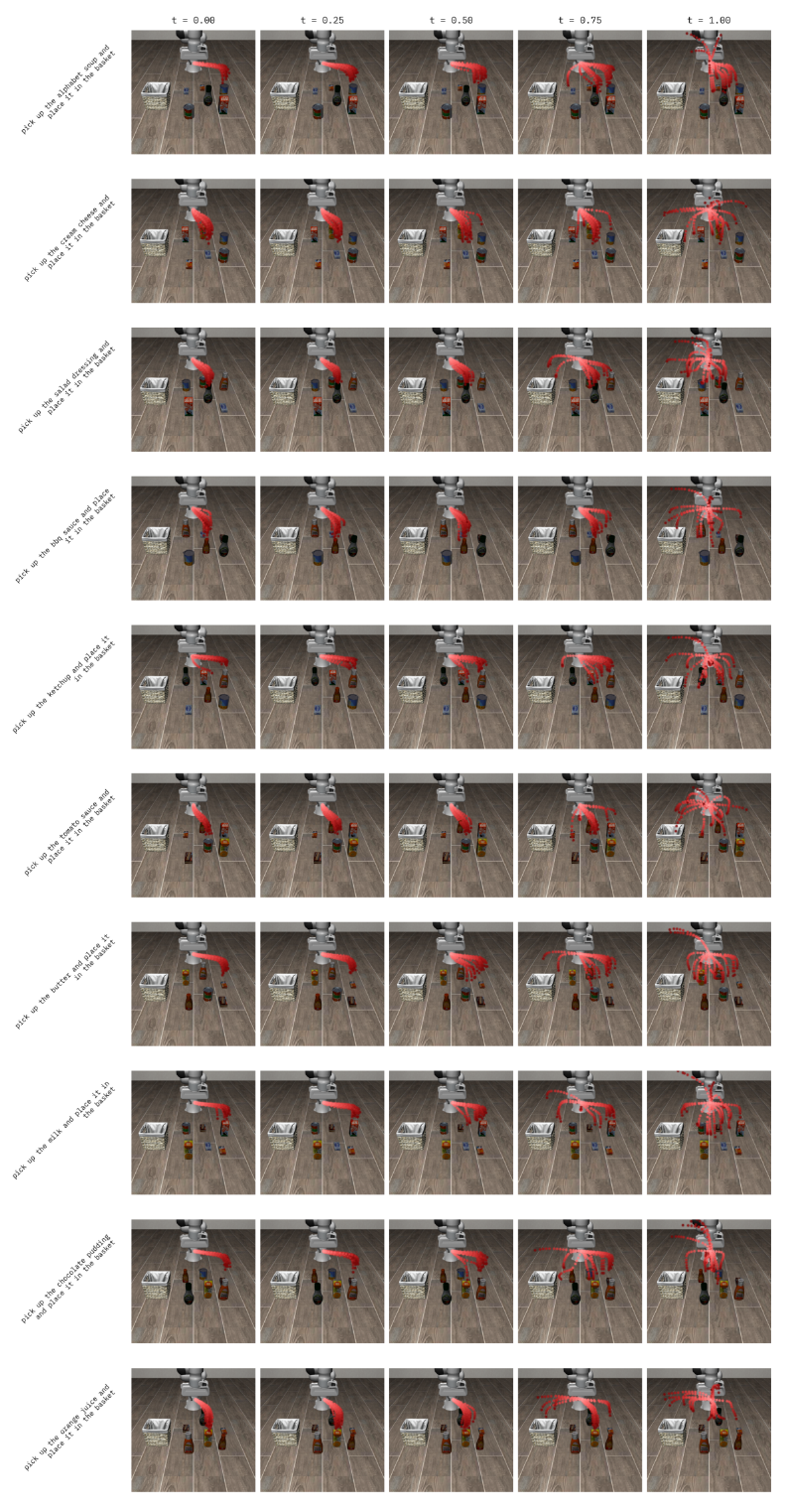}
    \caption{\textbf{\texttt{Libero-Object} action distributions under context smoothing.} Each row corresponds to a distinct task in the \texttt{Libero-Object} suite. Columns show rollouts as the observation embedding is progressively smoothed from the original context (left) toward the marginal distribution (right), interpolating between the conditional and marginal policy.}
    \label{fig:libero_object_smoothing}
\end{figure*}

\begin{figure*}[ht]
    \centering
    \includegraphics[width=0.65\linewidth]{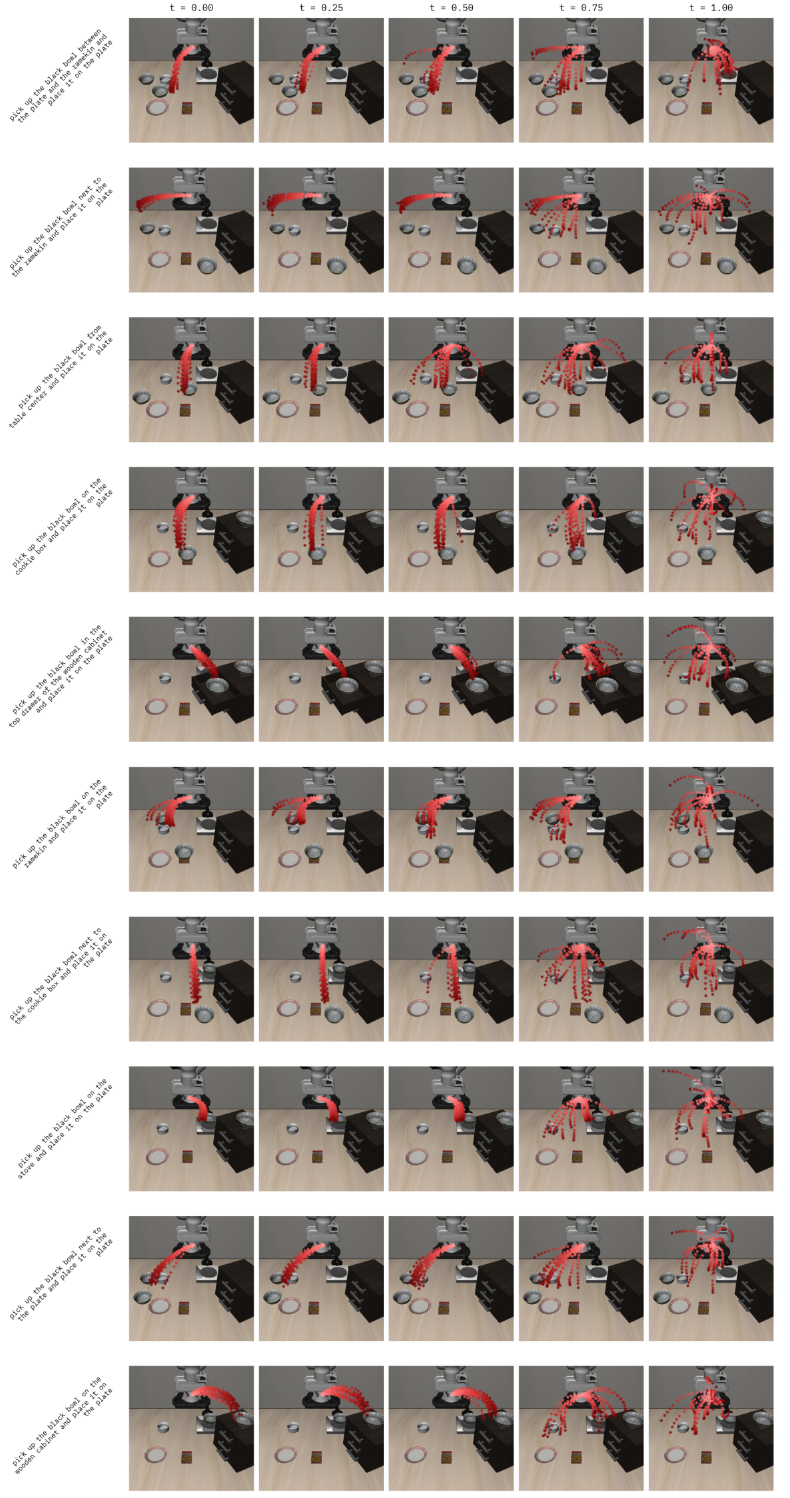}
    \caption{\textbf{\texttt{Libero-Spatial} action distributions under context smoothing.} Each row corresponds to a distinct task in the \texttt{Libero-Spatial} suite. Columns show rollouts as the observation embedding is progressively smoothed from the original context (left) toward the marginal distribution (right), interpolating between the conditional and marginal policy.}
    \label{fig:libero_spatial_smoothing}
\end{figure*}

\begin{figure*}[ht]
    \centering
    \includegraphics[width=0.65\linewidth]{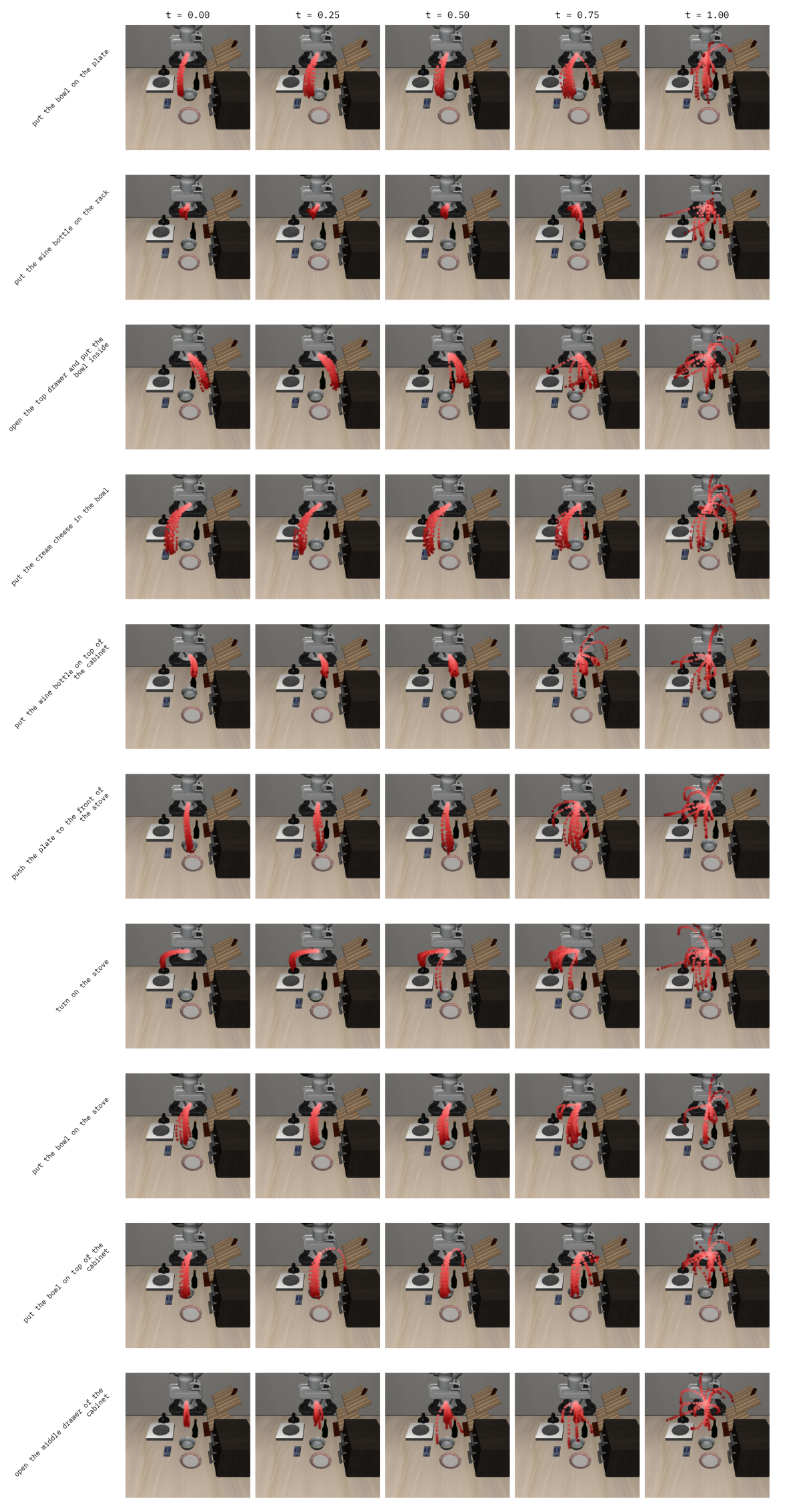}
    \caption{\textbf{\texttt{Libero-Goal} action distributions under context smoothing.} Each row corresponds to a distinct task in the \texttt{Libero-Goal} suite. Columns show rollouts as the observation embedding is progressively smoothed from the original context (left) toward the marginal distribution (right), interpolating between the conditional and marginal policy.}
    \label{fig:libero_goal_smoothing}
\end{figure*}

\end{document}